\pdfoutput=1 % MUST stay within the first 5 lines: tells arXiv to use pdfLaTeX.
\documentclass[11pt]{article}

\usepackage{pgfplots}
\pgfplotsset{compat=1.18}

\usepackage[font=libertinus, citestyle=numeric]{kurbanlab}

\DeclareAffiliation{hbku}{%
  College of Science and Engineering, Hamad Bin Khalifa University, Doha, Qatar}

\DeclareAffiliation{tamu}{%
  Department of Computer and Electrical Engineering,
  Texas A\&M University, College Station, TX, USA}

\DeclareAffiliation{iub}{%
  Luddy School of Informatics, Computing, and Engineering,
  Indiana University Bloomington, Bloomington, IN, USA}

\usepackage{makecell}
\usepackage{multirow}
\usetikzlibrary{positioning,arrows.meta,calc,fit,backgrounds,shapes.geometric}

\definecolor{wOrange}{HTML}{E69F00}
\definecolor{wSky}{HTML}{56B4E9}
\definecolor{wGreen}{HTML}{009E73}
\definecolor{wBlue}{HTML}{0072B2}
\definecolor{wVerm}{HTML}{D55E00}
\definecolor{wPurple}{HTML}{CC79A7}
\definecolor{wGrey}{HTML}{BBBBBB}

\newcommand{\cg}{\mathrm{CG}}
\newcommand{\hph}{\mathrm{HPH}}
\newcommand{\sfs}{\mathrm{SFS}}   % from the appendix preamble

\newcommand{\Grd}{\mathcal{G}}
\newcommand{\Bld}{\mathcal{B}}
\newcommand{\Cst}{\mathcal{C}}
\newcommand{\fn}[1]{\textsf{#1}}
\newcommand{\bt}{\boldsymbol{\theta}}
\newcommand{\bu}{\mathbf{u}}
\title{One Perturbation Is Not Enough: Identifiability and Blind Baselines for Behavioral AI Evaluation}
\RunningTitle{Identifiability and blind baselines for behavioral evaluation}

\Author{Rasul Khanbayov}{hbku}
\Author{Mariam Sohail}{hbku}
\Author{Ahmed Abdala}{hbku}
\Author[corresponding=hkurban@hbku.edu.qa, orcid=0000-0003-3142-2866]{Hasan Kurban}{hbku}
\CodeURL{https://github.com/KurbanIntelligenceLab/one-perturbation-is-not-enough}
\begin{document}
\maketitle

\begin{abstract}
Behavioral evaluations perturb an input and read the induced change in the
output in order to certify that a system uses that input. We show that the
number of perturbations such a certificate requires is fixed, and that
reporting a single perturbation cannot supply it. Where a response ratio is a
property of the policy rather than of the test items, the behavioral record is
a linear measurement of an exponent vector recording how much the output
depends on each input, so perturbations identify input use exactly when their
logarithms span the input space. At least $n$ are needed for $n$ inputs, an
incomplete design confuses precisely the policies differing along the kernel of
its design matrix, and sharpening a perturbation never substitutes for adding
an independent one. We also derive in closed form the score such
a test awards a policy that reads nothing, which is far from zero and which
none of the probes we survey reports. Instantiating this where the correct
response is fixed by dimensional analysis, we run a complete identifying set of
three perturbations on three vision--language models reporting a physical
quantity from video. All three score far below their own blind bound rather
than above it, because each defaults to one of a small set of round
calibration values that never matches what the scale asserts; none moves its
relabeling response by a single exponent, and none is separable from the same
model instructed to ignore the video.
\end{abstract}

%%%%%%%%%%%%%%%%%%%%%%%%%%%%%%%%%%%%%%%%%%%%%%%%%%%%%%%%%%%%%%%%%%%%%%%%%%%%%%%%
\section{Introduction}
\label{sec:intro}

% CONVERSION NOTE: was a two-column figure*; set as a one-column figure here.
\begin{figure}[t]
\centering
\resizebox{\linewidth}{!}{%
\begin{tikzpicture}[font=\small,
  lbl/.style={fill=white,fill opacity=0.80,text opacity=1,inner sep=1.6pt,rounded corners=0.6pt}]

%% ================= a: the design matrix and the rank ledger =================
\begin{scope}[shift={(0,0)},
  cell/.style={minimum width=12.0mm,minimum height=6.0mm,inner sep=0pt},
  zero/.style={cell,fill=black!6,text=black!40},
  cR/.style={cell,fill=wBlue!22},
  cS/.style={cell,fill=wOrange!30},
  cT/.style={cell,fill=wGreen!22}]
  \node[anchor=south] at (0.64,0.96) {$m$};
  \node[anchor=south] at (1.92,0.96) {$c$};
  \node[anchor=south] at (3.20,0.96) {$\phi$};
  \node[anchor=east,text=wBlue]            at (-0.06,0.62) {$R_\alpha$};
  \node[anchor=east,text=wOrange!85!black] at (-0.06,0.00) {$S_s$};
  \node[anchor=east,text=wGreen]           at (-0.06,-0.62) {$T_\beta$};
  \node[zero] at (0.64, 0.62) {$0$};      \node[cR]   at (1.92, 0.62) {$\log\alpha$};
  \node[zero] at (3.20, 0.62) {$0$};      \node[cS]   at (0.64, 0.00) {$\log s$};
  \node[cS]   at (1.92, 0.00) {$-\log s$};\node[zero] at (3.20, 0.00) {$0$};
  \node[zero] at (0.64,-0.62) {$0$};      \node[zero] at (1.92,-0.62) {$0$};
  \node[cT]   at (3.20,-0.62) {$\log\beta$};
  \draw[black!55,line width=0.6pt] (0.10,0.95) -- (0.01,0.95) -- (0.01,-0.95) -- (0.10,-0.95);
  \draw[black!55,line width=0.6pt] (3.74,0.95) -- (3.83,0.95) -- (3.83,-0.95) -- (3.74,-0.95);
  \node[anchor=base west,text=black!55] at (-0.50,-1.34) {relations used};
  \node[anchor=base east,text=black!55] at (3.83,-1.34) {$\dim\ker U$};
  \draw[black!25,line width=0.5pt] (-0.50,-1.50) -- (3.83,-1.50);
  \node[anchor=base west,text=wBlue]            at (-0.50,-1.82) {$R_\alpha$};
  \node[anchor=base east]                       at (3.83,-1.82) {$2$};
  \node[anchor=base west,text=wOrange!85!black] at (-0.50,-2.18) {$+\,S_s$};
  \node[anchor=base east]                       at (3.83,-2.18) {$1$};
  \node[anchor=base west,text=wGreen]           at (-0.50,-2.54) {$+\,T_\beta$};
  \node[anchor=base east,font=\bfseries]        at (3.83,-2.54) {$0$};
\end{scope}
\node[anchor=base,font=\bfseries\large] at (-0.72,1.14) {a};

%% ================= b: the spatial plane =====================================
\begin{scope}[shift={(6.05,-1.45)},x=1.72cm,y=1.72cm]
  \draw[wOrange,line width=7.6pt,opacity=0.26,line cap=round] (-0.02,-0.02) -- (1.52,1.52);
  \draw[wBlue,line width=7.6pt,opacity=0.30,line cap=round]  (-0.14,0) -- (1.66,0);
  \draw[black!45,line width=0.5pt] (-0.28,0) -- (1.86,0);
  \draw[black!45,line width=0.5pt] (0,-0.26) -- (0,1.80);
  \draw[black!45,line width=0.5pt] (0.04,1) -- (-0.04,1);
  \node[anchor=east,inner sep=2.5pt,text=black!55] at (-0.05,1) {$1$};
  \node[anchor=west,inner sep=2.5pt] at (1.88,0) {$\theta_1$ (pixels)};
  \node[anchor=south,inner sep=2.5pt] at (0,1.82) {$\theta_2$ (calibration)};
  \draw[black!35,dotted,line width=0.6pt] (1,0) -- (1,0.93);
  \node[star,star points=5,star point ratio=2.2,fill=wGreen,draw=white,line width=0.4pt,
        inner sep=2.4pt] at (1,1) {};
  \fill[black] (0,0) circle (2.9pt);
  \fill[wBlue!88!black] (0.58,0) circle (2.9pt);
  \node[lbl,anchor=east]  at (-0.06,0) {constant};
  \node[lbl,anchor=north,text=wBlue!88!black] at (0.58,-0.10) {blind to scale};
  \node[lbl,anchor=west,text=wGreen] at (1.09,1.00) {grounded};
  \node[lbl,anchor=north east,text=wBlue,align=right] at (1.82,0.62)
    {same record\\under $R_\alpha$ alone};
  \node[lbl,anchor=north west,text=wOrange!85!black,align=left] at (0.05,1.76)
    {same record\\under $S_s$ alone};
\end{scope}
\node[anchor=base,font=\bfseries\large] at (5.55,1.14) {b};
\end{tikzpicture}}
\caption{\textbf{How many perturbations an evaluation needs.}
\textbf{a}, Each relation is one row of the design matrix $U$; each row
independently shrinks the confusion set. \textbf{b}, Policies sharing a
record with $\bt$ form the coset $\bt+\ker U$ (\cref{thm:rank}).
Relabeling alone merges a scale-blind tracker with a constant emitter;
resampling alone merges grounded with constant instead; the bands cross only
at the origin. The spatial pair together still leaves the temporal
coordinate $\theta_3$ unconstrained, so a grounded policy and one that
ignores only the asserted frame rate remain indistinguishable until
$T_\beta$ is added.}
\label{fig:teaser}
\end{figure}

Evaluations decide what gets deployed, so an evaluation that cannot separate
a system exhibiting a property from one merely insensitive to the input does
not support the decision it is used for. This paper gives the exact conditions
under which one widely used class of behavioral test has that defect, the
minimal design change that removes it, and a demonstration that the defect
fires on current models. We work in a setting where the correct response is
fixed by dimensional analysis rather than by a judge, so the diagnosis rests on
arithmetic rather than on a rater. Benchmarks now ask vision--language models
(VLMs) to report physical quantities from video and score the predicted number
against ground truth
\citep{quantiphy2025,iris2026,physicsmind2026}, an advance over qualitative
visual question answering \citep{vqa2015,chow2025physbench}. Scoring the
number alone conflates two behaviors: a model that measures, by locating the
object, reading the scale, and converting pixels to physical units, and a
model that emits a plausible prior can receive the same score whenever the
prior lands in the tolerance band.

\citet{quantiphy2025} supply the sharpest existing test. They hand the model a
physical prior in the prompt, multiply it by a counterfactual factor $\alpha$,
and check whether the prediction tracks $\alpha y$. It does not: most of their
21 models lose about $80\%$ of their score and even the strongest loses
$70\%$. We ask whether that result survives when the calibration must instead
be read off the image. The distinction matters because the failures differ in
cause and in remedy. A model may ignore a textual prior because it distrusts
an implausible number, which \citeauthor{quantiphy2025} observe directly,
while retaining the ability to measure; a model that cannot read an in-frame
scale fails earlier, and no prompting repairs it.

We move the analytic target into the visual channel. Relabel the in-frame
scale so the true calibration changes from $c$ to $\alpha c$ millimeters per
pixel, leaving the object's pixel trajectory untouched. A faithful measurer's
reported velocity must change by exactly $\alpha$, because its pixel
measurement is unchanged and only the conversion moved. The target is
analytic, so no rater or judge is needed, and because the score is a ratio of
two model outputs it needs no ground-truth quantity at all.

This construction is a \emph{metamorphic relation}: a property linking the
outputs of two related inputs, checkable without a label, and the standard
response to the oracle problem in software testing
\citep{chen1998metamorphic,vqamt2021,mtvla2026}. Relations used for these
models are predominantly \emph{semantic-consistency} relations expecting no
change; ours is quantitative and expects a specific nonzero change fixed by
dimensional analysis. Building the probe is the easy part. Two questions
decide whether its output means anything, and neither is standard practice.
Both are construct-validity requirements \citep{bean2025construct}, and they
bind hardest where a probe certifies a property rather than ranks systems: a
test that cannot separate ``ignores this input'' from ``ignores everything''
licenses the wrong conclusion in both directions.

\paragraph{Why identifiability matters beyond this probe}
\citet{barnett2024evals} argue that behavioral evaluation cannot upper-bound a
capability, since failing to find one is not evidence of absence, and
\citet{iasr2026} report that many benchmarks fail to measure the capability
they name because models reach correct answers by shortcut. The failure here is
asymmetric: a change-expecting relation is violated by \emph{any} policy whose
output does not move, so a model emitting confident constants scores exactly as
one that reasons correctly but ignores a single input --- and both developers
and regulators have an incentive for an evaluation to understate what a system
does \citep{sandbagging2024}. \citet{noiseinject2024} detect this by adding a
second probe with a different expected response rather than by sharpening the
first; the structure is the same as here, and \cref{cor:mtpair}
explains why a different kind of probe is needed where a more sensitive
version of the first would not help.

\paragraph{Two questions this raises}
What does a response identify?
Relabeling alone is not enough, and \cref{fig:teaser} shows which pair
of policies it leaves merged and which second relation separates them. What
would an ungrounded model score? A calibration-grounding rate is
uninterpretable without the rate a policy that ignores the scale would
achieve, and that rate is not zero; we derive it exactly, as a supremum over
all calibration-blind policies (\cref{thm:blind}).

\begin{kilkey}
One perturbation is never enough. Certifying that a system uses all $n$ of its
inputs requires $n$ relations whose log-perturbation vectors span the input
space; sharpening a probe already in the span cannot substitute for adding an
independent one.
\end{kilkey}

\paragraph{Contributions}
\begin{enumerate}
\item \textbf{An exact identifiability theorem.} The behavioral record of a
perturbation-based evaluation is a linear measurement of an exponent vector.
Identification holds exactly when the log-perturbation vectors span the
input space, at least $n$ perturbations are needed for $n$ inputs, and an
incomplete design confuses exactly one coset of the design matrix's kernel
(\cref{prop:cauchy}, \cref{thm:rank}). This replaces case
analysis with a rank condition and tells a designer which confusions survive
when a perturbation cannot be added (\cref{cor:mtpair}).
\item \textbf{Closed-form blind baselines.} The accuracy attainable by any
input-blind policy is bounded by the largest condition share, $1/3$ in our
design, a supremum over the whole policy class that needs no model runs
(\cref{thm:blind}).
\item \textbf{A complete identifying set, run on three models.} We run all
three perturbations the theorem requires, rather than one and inferring, and
on three models spanning two families and two capability tiers. All three
score at or near zero on calibration grounding, far below their own blind
bound rather than above it, because each defaults to one of a small set of
round calibration values that never matches what the scale asserts in any
condition. For the one model with all three relations run, the two
relabeling magnitudes imply opposite-sign exponents, so no single value fits
and the policy falls outside the family \cref{prop:cauchy} covers;
none of the three is separable from the same model instructed to emit a
constant.
\item \textbf{A demonstration that the design matrix has to be right.} The
temporal perturbation's ideal response is $\beta$, not $\beta^{-1}$ as this
protocol previously stated, a sign error in one row that changes the verdict
on that input alone. We release \fn{sgf\_metrics.py}, which builds the
design matrix and computes both bounds for any design of this shape.
\end{enumerate}

%%%%%%%%%%%%%%%%%%%%%%%%%%%%%%%%%%%%%%%%%%%%%%%%%%%%%%%%%%%%%%%%%%%%%%%%%%%%%%%%
\section{Related work and positioning}
\label{sec:related}

Three lines of work bear on this. Probes that perturb a physical prior and
score the change; probes that perturb an input and score whether the stated
reasoning follows; and the blind-baseline and construct-validity literature
that asks what a score should be read against. Each supplies a piece of the
design used here, and none states what its own perturbation identifies.

\paragraph{Counterfactual scaling and instrument reading}
The closest work is \citet{quantiphy2025}, whose benchmark supplies one
physical property in world units and scores size, velocity, and acceleration
against numerical ground truth, then multiplies the supplied prior by
$\alpha\in\{0.001,\dots,700\}$ and finds the prediction does not track
$\alpha y$. We take this as established. Two things differ here: the
calibration is delivered visually, inside the frame rather than as a number in
the prompt, which separates a perceptual failure from a prior-trust failure;
and the score is a ratio of two model outputs rather than an accuracy against
$\alpha y$, which removes the need for ground truth. Neither work states what
its perturbation identifies, which is the gap \cref{sec:theory}
addresses. \citet{measurebench2026} benchmark VLM reading of gauges, dials, and
rulers over 2{,}442 pairs, where the best model reaches $30.2\%$ on real-world
images while exceeding $90\%$ on unit recognition; that work scores readings in
isolation on static images, whereas we measure the same step inside a
downstream video task. \citet{iris2026} and \citet{physicsmind2026} recover
physical parameters and evaluate law-consistent reasoning, and neither perturbs
the calibration.

\paragraph{Faithfulness by perturbation, and its limits}
Faithfulness is probed by perturbing inputs and asking whether the stated
reasoning changes
\citep{lanham2023faithfulness,turpin2023language,medfaith2025,rlvlm2026,stepground2026},
and \citet{crossmodal2026} show VLMs often answer visual queries from text
alone. These probes score a model's \emph{explanation} using raters or
heuristics and report perturbation sensitivity without an insensitive-policy
baseline. The closest statement of our first result is qualitative:
\citet{seth2026assurance} argue that behavioral assurance cannot support the
absence claims governance asks of it, because a model concealing a property
produces the same record as one that never had it. \Cref{thm:rank} makes
that precise and constructive: it says exactly which policies share a record
under a given design, and which perturbation to add to break the tie.
\citet{srivastava2026limits} prove a complementary impossibility from latent
context conditioning; our obstruction needs no hidden trigger and no
distribution shift, and it is removable by construction.

\paragraph{Blind baselines and evaluation validity}
That multimodal benchmarks can be answered without looking is well documented
\citep{blindvlm2024,crossmodal2026,lan2026seeing,singla2026guess,yuan2026mmgist},
and the failure generalizes past multimodal work
\citep{mi2024blindbaselines,anand2018blindfold,pacchiardi2024clevrhans}
(\cref{app:probes} details each). The standard diagnostic in all of these is an
empirical ablation that withholds the input and re-measures one model.
\Cref{thm:blind} is a different object: an analytic supremum over all
blind policies, needing no model runs. An ablation says what one model does
when blinded; the bound says what the best blind policy could score.
\citet{bean2025construct} review 445 LLM benchmarks with 29 expert reviewers
and report that the phenomena measured are often defined loosely and that
statistical testing is rarely performed; \citet{alaa2025medical} and
\citet{salaudeen2025validity} argue similarly. Two requirements from that
literature bear here: a measure should discriminate its target from nearby
properties, and a score should be read against what a system lacking the
property would obtain. \Cref{sec:theory} supplies both in closed
form; \cref{app:probes} tabulates what each probe perturbs and against what
baseline.

%%%%%%%%%%%%%%%%%%%%%%%%%%%%%%%%%%%%%%%%%%%%%%%%%%%%%%%%%%%%%%%%%%%%%%%%%%%%%%%%
\section{What a perturbation probe can identify}
\label{sec:theory}

An evaluation perturbs inputs and reads the induced change in the output. We
ask in general how many perturbations are needed before the record determines
which inputs the system used, and which perturbations will do. The answer is a
rank condition. Everything in this section is used by
\cref{sec:results}; supporting material is in the appendix.

A system receives $n$ positive scalar inputs
$v=(v_1,\dots,v_n)$ and reports a positive quantity $\pi(v)$. In our task
$n=3$: the object's pixel-space displacement $v_1=m$, the calibration $v_2=c$
that the in-frame scale asserts, and the frame rate $v_3=\phi$ that the prompt
asserts. A \emph{relation} with vector $a\in\R_{>0}^{n}$ maps
$v\mapsto(a_1v_1,\dots,a_nv_n)$, and the observable is the response ratio
$r=\pi(av)/\pi(v)$.

\begin{definition}[Log-linear policies]
\label{def:loglin}
$\pi$ is \emph{log-linear with exponent} $\bt\in\R^{n}$ if
$\pi(v)=\kappa\prod_i v_i^{\theta_i}$ for some $\kappa>0$. Write $\Pi$ for this
family. The coordinate $\theta_i$ records how much the report depends on input
$i$, and $\theta_i=0$ means the report ignores it.
\end{definition}

\begin{proposition}[Scope]
\label{prop:cauchy}
Let $\pi>0$ be measurable. The ratio $\pi(av)/\pi(v)$ is independent of $v$ for
every $a$ if and only if $\pi\in\Pi$, in which case
\begin{equation}
\label{eq:loglin}
\log r=\langle\bt,\,\bu\rangle,\qquad \bu:=\log a .
\end{equation}
\end{proposition}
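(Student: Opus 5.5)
The plan is to pass to logarithmic coordinates, where the statement becomes the classical Cauchy functional equation on $\R^n$. Set $x=\log v\in\R^n$ and $f(x)=\log\pi(e^{x})$, which is a measurable real function on $\R^n$ because $\pi>0$ is measurable. A relation with vector $a$ corresponds to the translation $x\mapsto x+\bu$ with $\bu=\log a$. Every $\bu\in\R^n$ arises as $a$ ranges over $\R^n_{>0}$.

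\emph{Sufficiency.} This direction is direct. If $\pi(v)=\kappa\prod_i v_i^{\theta_i}$, then $\pi(av)/\pi(v)=\prod_i a_i^{\theta_i}$, which does not depend on $v$. Taking logarithms gives $\log r=\sum_i\theta_i\log a_i=\langle\bt,\bu\rangle$, which is \cref{eq:loglin}.

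\emph{Necessity.} Suppose that for every $a$ the ratio is some $g(a)$ independent of $v$. In log coordinates this reads $f(x+\bu)-f(x)=h(\bu)$ for all $x,\bu$, where $h=\log g\circ\exp$.

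\begin{enumerate}
\item Evaluating at $x=0$ gives $h(\bu)=f(\bu)-f(0)$. So $h$ is measurable, and $F:=f-f(0)$ satisfies $F(x+\bu)=F(x)+F(\bu)$. Hence $F$ is additive on $\R^n$.
\item Additivity makes $F$ $\mathbb{Q}$-linear. Restrict it to each coordinate axis: $t\mapsto F(te_i)$ is a measurable additive function $\R\to\R$.
\item The main obstacle is showing that this one-variable function is continuous, hence linear. Here I would invoke the standard theorem (Fréchet, Banach, Sierpiński) that a Lebesgue-measurable additive function on $\R$ is continuous. If a self-contained argument is wanted, I would use Steinhaus's theorem. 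By Lusin, choose a compact $K$ of positive measure on which $F$ is bounded. Then $K-K$ contains an interval around $0$, and additivity makes $F$ bounded on that interval. Boundedness near $0$ together with $\mathbb{Q}$-homogeneity gives continuity at $0$, hence everywhere.
\item From step 3, $F(te_i)=\theta_i t$ with $\theta_i:=F(e_i)$. Additivity across coordinates then gives $F(x)=\langle\bt,x\rangle$.
\end{enumerate}

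Undoing the change of variables yields $\pi(v)=e^{f(0)}\prod_i v_i^{\theta_i}$, so $\pi\in\Pi$ with $\kappa=\pi(\mathbf{1})>0$. Moreover $h(\bu)=\langle\bt,\bu\rangle$, which recovers \cref{eq:loglin}.

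Two remarks on the hypotheses. Measurability is used only in step 3. Without it, Hamel-basis solutions would give non-log-linear policies with $v$-independent ratios, which is why the proposition assumes it.
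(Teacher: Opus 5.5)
Your proof is correct and follows the paper's route exactly. You pass to $f=\log\pi\circ\exp$, set $x=0$ to obtain additivity of $F=f-f(0)$, and then invoke the theorem that a measurable additive function is linear, writing out the Lusin--Steinhaus details that the paper only cites.

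One small technicality: if ``measurable'' means Lebesgue rather than Borel, the restriction of $F$ to a coordinate axis (a null set in $\R^n$) is not automatically measurable. Either run the Steinhaus step directly in $\R^n$, where $K-K$ contains a ball about $0$, as the paper's citation implicitly does; or use Fubini to pick a parallel line $y+\R e_i$ on which $F$ is measurable and write $F(te_i)=F(y+te_i)-F(y)$.
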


\Cref{prop:cauchy} follows from Cauchy's functional
equation (\cref{app:scope}). It quantifies over the whole positive orthant,
whereas
an experiment probes finitely many operating points, so it says what the
observable can mean rather than describing a test we run. It fixes the scope of
everything below: a response ratio describes the policy rather than the item
mix exactly on $\Pi$, so $\Pi$ is not a convenience but the family the
observable can speak about. Outside $\Pi$ the
ratio moves with the operating point, which the released code detects by
evaluating it at several operating points; \cref{app:loglinscope} gives the two
ways the
restriction bites and reports the temporal relation in full.

\Cref{def:loglin} and \cref{eq:loglin} are the engine. Apply relations
$a_1,\dots,a_k$,
stack their logarithms as the rows of the \emph{design matrix}
$U\in\R^{k\times n}$, and the entire behavioral record is
$\log\mathbf{r}=U\bt$: a linear measurement of the exponent.

\begin{theorem}[Identifiability of input use]
\label{thm:rank}
Let $\pi,\pi'\in\Pi$ have exponents $\bt,\bt'$, and let $U$ be the design
matrix of any finite set of relations. Then $\pi$ and $\pi'$ produce the same
record if and only if $\bt-\bt'\in\ker U$. Consequently:
\emph{(i)} the record determines $\bt$ if and only if
$\operatorname{rank}U=n$, so at least $n$ relations are necessary and any $n$
with linearly independent log-perturbation vectors suffice; \emph{(ii)}
otherwise the policies sharing a record with $\bt$ are exactly the coset
$\bt+\ker U$, of dimension $n-\operatorname{rank}U$; and \emph{(iii)} writing
$e_i$ for the $i$-th standard basis vector, the record determines the single
coordinate $\theta_i$, and equally decides whether $\theta_i=0$, if and only if
$e_i\perp\ker U$, that is if and only if $e_i$ lies in the row space of $U$.
\end{theorem}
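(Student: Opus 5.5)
The plan is to reduce everything to finite-dimensional linear algebra through \cref{prop:cauchy}. For $\pi\in\Pi$ with exponent $\bt$, \cref{eq:loglin} gives, for each relation $a_j$ with $\bu_j=\log a_j$, the response $\log r_j=\langle\bt,\bu_j\rangle$. This value does not depend on the operating point $v$. Stacking the rows gives $\log\mathbf{r}=U\bt$. Since $\log$ is a bijection on $\R_{>0}$, two policies produce the same record exactly when $U\bt=U\bt'$, that is, when $U(\bt-\bt')=0$. This proves the main equivalence. One point needs care: the record consists only of response ratios, so the constant $\kappa$ never enters and cannot be recovered. I will state explicitly that ``the record determines $\bt$'' refers to the exponent, not to $\kappa$.

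\emph{(ii)} follows directly. The set $\{\bt':U\bt'=U\bt\}$ is the coset $\bt+\ker U$. Every $\bt'$ in it is realized by some member of $\Pi$, for instance with $\kappa=1$. Its dimension is $\dim\ker U=n-\operatorname{rank}U$ by rank--nullity.

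\emph{(i)} follows from (ii). The record determines $\bt$ iff the coset is a single point, iff $\ker U=\{0\}$, iff $\operatorname{rank}U=n$. Since $\operatorname{rank}U\le k$, this needs $k\ge n$. Conversely, $n$ linearly independent rows give rank $n$.

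\emph{(iii)} is the only step needing a small argument. The coordinate $\theta_i$ is determined by the record iff $\langle e_i,\bt'\rangle$ is constant on $\bt+\ker U$, iff $\langle e_i,w\rangle=0$ for all $w\in\ker U$, that is, $e_i\perp\ker U$. For the reverse direction, if some $w\in\ker U$ has $w_i\neq0$, then $\bt$ and $\bt+tw$ share a record but have different $i$-th coordinates. Choosing $t=-\theta_i/w_i$ when $\theta_i\ne0$, or $t=1$ when $\theta_i=0$, gives two record-equivalent policies, one with $\theta_i=0$ and one without. So the test ``$\theta_i=0$?'' is decided by the record exactly under the same condition. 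Finally, $(\ker U)^\perp=\operatorname{row}(U)$ (the fundamental theorem of linear algebra, in the standard inner product on $\R^n$), which gives the row-space formulation.

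I do not expect a real obstacle here. The only delicate points are bookkeeping: (a) making sure the record is independent of the operating point, which is exactly what \cref{prop:cauchy} supplies; (b) checking that each coset element is attained within $\Pi$; and (c) phrasing the ``decides whether $\theta_i=0$'' clause so that it is an iff rather than just the forward direction. The explicit perturbation $\bt+tw$ above handles (c).
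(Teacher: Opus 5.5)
Your proposal is correct and matches the paper's proof step for step: the record is $U\bt$ by \cref{eq:loglin}, (ii) is the coset statement plus rank--nullity, (i) is injectivity exactly when $\ker U=\{0\}$, and (iii) uses $(\ker U)^{\perp}=\operatorname{row}(U)$. Your explicit choice of $t$ in $\bt+tw$ proves the ``decides whether $\theta_i=0$'' clause outright. The paper's appendix proof covers that clause only through the proof sketch's remark that $\theta_i$ sweeps all of $\R$ on the coset.
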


\begin{proofsketch}
By \cref{eq:loglin} the record is $U\bt$, so records coincide exactly on
cosets of $\ker U$, giving (ii), and the map is injective exactly when
$\ker U=\{0\}$, giving (i). For (iii), $\theta_i=\langle e_i,\bt\rangle$ is
constant on those cosets iff $\langle e_i,z\rangle=0$ for every $z\in\ker U$;
when it is not constant it sweeps all of $\R$, so the coset contains
both $\theta_i=0$ and $\theta_i\neq0$ policies and even the weaker question is
undecidable. Full proof in \cref{app:rank}.
\end{proofsketch}

\begin{corollary}[Design rule]
\label{cor:mtpair}
A single relation gives $\operatorname{rank}U\le1$, so for $n\ge2$ every
confusion set is at least $(n-1)$-dimensional: one perturbation is never
enough. A set of relations certifies that a system uses all $n$ of its inputs
if and only if their log-perturbation vectors span $\R^{n}$. Enlarging
a relation already in the span cannot help however large its magnitude, so
sharpening a probe never substitutes for adding an independent one.
\end{corollary}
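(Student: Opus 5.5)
The plan is to read everything off \cref{thm:rank}, which already turns each claim into a statement about the rank and kernel of the design matrix $U$. I treat the three sentences of the corollary in order.

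\emph{One relation.} A single relation gives a $1\times n$ matrix $U=\bu^{\top}$, whose rank is at most $1$. By rank--nullity, $\dim\ker U\ge n-1$. Part (ii) of \cref{thm:rank} then says the confusion set $\bt+\ker U$ has dimension $n-\operatorname{rank}U\ge n-1\ge1$ when $n\ge2$. So the record never determines $\bt$. I will also note the stronger consequence from part (iii). Because $\dim\ker U\ge1$, some $e_i$ fails to be orthogonal to $\ker U$. For that input, even the question ``is $\theta_i=0$?'' cannot be decided.

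\emph{Certification.} ``Certifies that a system uses all $n$ inputs'' needs a precise reading: the record decides $\theta_i=0$ versus $\theta_i\neq0$ for every $i$, for every policy in $\Pi$. By part (iii), this holds if and only if $e_i$ lies in the row space of $U$ for each $i$. That is equivalent to the row space being $\R^n$, i.e.\ $\operatorname{rank}U=n$. Since the rows are exactly the log-perturbation vectors, this is the stated spanning condition. Sufficiency also follows directly from part (i). The one point needing care is necessity. I will make explicit that certification must work uniformly over $\Pi$, so any single $i$ that part (iii) leaves undecidable already breaks it.

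\emph{Sharpening.} Suppose the added relation $a'$ has $\bu'=\log a'$ in the row space of $U$. This covers the main case of interest: $a'=a^{t}$ with $\bu'=t\bu$, which is the same perturbation at a larger magnitude. Appending $\bu'$ as a row leaves the row space unchanged, and hence leaves $\ker U$ unchanged, since the kernel is the orthogonal complement of the row space. By \cref{thm:rank}, the confusion cosets are therefore identical for every $t$, however large. By contrast, a row outside the span strictly raises the rank and shrinks $\ker U$.

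The main obstacle is interpretive rather than technical. I need to formalize ``certifies use of all inputs'' so that part (iii) gives necessity and not merely sufficiency. I also need to formalize ``enlarging'' as scaling the log-vector, or more generally staying in the row space. Once both are fixed, every step is linear algebra on top of \cref{thm:rank}.
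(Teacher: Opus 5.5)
Your proposal is correct and follows the paper's route. The paper says \cref{cor:mtpair} follows from \cref{thm:rank} in one line, and your three steps spell that line out: rank--nullity for a single row, part (iii) applied to every $e_i$ to obtain the spanning condition, and invariance of the row space (hence of $\ker U$) when a row already in the span is appended or rescaled. Your uniformity caveat is not needed for necessity: whenever some $z\in\ker U$ has $z_i\neq0$, the line $\bt+tz$ puts both $\theta_i=0$ and $\theta_i\neq0$ policies in the confusion set of \emph{every} $\bt$, so certification fails for each policy individually.
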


\Cref{fig:teaser}\textbf{b} draws the two one-relation kernels
and the different pair of policies each of them merges. The rule is
indifferent to \emph{why}
$\theta_i=0$: a system that ignores input $i$ incidentally and one trained to
ignore it have the same exponent and the same record, so no enlargement of a
single relation separates incidental from strategic insensitivity
\citep{sandbagging2024}.

\subsection{The blind-policy bound}
\label{sec:blindbound}

Identifiability says what a design can distinguish. It does not say what score
a system that reads nothing would obtain, and that score is not zero.
Calibration grounding elicits the model's read of the scale, $\hat c$, and
scores $\cg=\acc@\tau(\hat c,c^\star)$ against the true asserted value; a
policy that never inspects the ruler still scores above zero whenever its fixed
answer happens to match.

\begin{theorem}[Supremum over calibration-blind policies]
\label{thm:blind}
Let the design present conditions $j=1,\dots,k$ with item shares $p_j$ summing
to $1$, condition $j$ having true asserted calibration $\alpha_j c$. Assume
\emph{(a)} the conditions are $\tau$-separated,
$|\alpha_j-\alpha_{j'}|/\max(\alpha_j,\alpha_{j'})>2\tau$ for all $j\neq j'$,
and \emph{(b)} the condition label is assigned independently of everything else
the policy sees, which a fully crossed design satisfies. Then every policy whose
report is independent of the asserted calibration, deterministic or randomized
and not necessarily in $\Pi$, satisfies $\E[\cg]\le\max_j p_j$, and the
bound is attained by reporting $\alpha_{j^\star}c$ with
$j^\star=\arg\max_j p_j$.
\end{theorem}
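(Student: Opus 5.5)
The argument conditions on everything the policy sees other than the condition label. It then reduces the expected score to a single report that lands in at most one tolerance band. Write $X$ for the policy's full input other than the condition label, $J$ for the label with $\Pr(J=j)=p_j$, and $\hat c$ for the reported calibration, which may also depend on independent internal randomness $\xi$. The hypothesis that the report is independent of the asserted calibration means $\hat c=f(X,\xi)$ for some measurable $f$. Assumption \emph{(b)} gives $J\perp (X,\xi)$. The first step is to write
\begin{equation*}
\E[\cg]=\E\Bigl[\sum_j p_j\,\mathbf{1}\{\hat c\in B_j\}\Bigr],
\end{equation*}
where $B_j=\{x:|x-\alpha_j c|\le\tau\max(x,\alpha_j c)\}$ is the acceptance band of $\acc@\tau$ around $\alpha_j c$. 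I will fix the precise form of $\acc@\tau$ to the paper's definition, a relative tolerance. Independence is what allows conditioning on $(X,\xi)$ and integrating $J$ out with its marginal $p_j$.

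The key step is pointwise. For any fixed value $x>0$, the indicators $\mathbf{1}\{x\in B_j\}$ sum to at most $1$, meaning the bands are pairwise disjoint. Given that, $\sum_j p_j\mathbf{1}\{x\in B_j\}\le\max_j p_j$ for every $x$. Taking expectation over $(X,\xi)$ then yields the bound for deterministic and randomized policies alike. No log-linearity is used, so the bound holds outside $\Pi$ as claimed.

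Disjointness follows from $\tau$-separation \emph{(a)} via the triangle inequality. Suppose $x\in B_j\cap B_{j'}$ with $\alpha_j\ge\alpha_{j'}$. Then
\begin{equation*}
|\alpha_j c-\alpha_{j'}c|\le|x-\alpha_jc|+|x-\alpha_{j'}c|\le 2\tau\max(x,\alpha_j c).
\end{equation*}
This must be compared with the assumed gap, which exceeds $2\tau\alpha_j c$. The main obstacle is here. When $x>\alpha_j c$ the right-hand side exceeds $2\tau\alpha_jc$, so the naive bound does not immediately contradict \emph{(a)}.

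I would handle this case by direct algebra. If $x\ge\alpha_jc\ge\alpha_{j'}c$ and $x\in B_{j'}$, then $x-\alpha_{j'}c\le\tau x$, which gives $x\le\alpha_{j'}c/(1-\tau)$. On the other hand $x\ge\alpha_j c$, so $\alpha_{j'}\ge(1-\tau)\alpha_j$, i.e. $(\alpha_j-\alpha_{j'})/\alpha_j\le\tau<2\tau$. This contradicts \emph{(a)}. The case $x\le\alpha_j c$ is exactly the displayed inequality. If the paper's $\acc@\tau$ normalizes by the true value rather than the max, the same case split works with the appropriate constant, and $2\tau$ separation is more than enough.

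Attainment is immediate. The constant report $\hat c=\alpha_{j^\star}c$ lies in $B_{j^\star}$ and, by disjointness, in no other band. It is therefore scored correct exactly on condition $j^\star$, giving $\E[\cg]=p_{j^\star}=\max_j p_j$. Such a report is calibration-blind provided $c$ is known to the policy as a fixed design constant.
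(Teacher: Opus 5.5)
Your proof is correct and follows the paper's own argument: use (b) to integrate out the label, show that each realized report lies in at most one tolerance band via the triangle inequality and (a), average over the policy's randomness, and attain the bound with the constant report $\alpha_{j^\star}c$. The only divergence is your band: the paper's tolerance-band accuracy normalizes by the true value, so the band is $|x-\alpha_jc|\le\tau\alpha_jc$ and the triangle inequality gives $|\alpha_j-\alpha_{j'}|c\le\tau(\alpha_j+\alpha_{j'})c\le2\tau\max(\alpha_j,\alpha_{j'})c$ at once, with no case split; since your max-normalized bands $B_j$ contain these, your disjointness argument covers the paper's scoring \emph{a fortiori}, and the case you flag as the main obstacle comes from the wider band rather than being a gap.
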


\begin{proofsketch}
By \emph{(b)} the law of $\hat c$ does not depend on $j$, so conditioning on a
realized value leaves the condition distributed as $p$. By the triangle
inequality, \emph{(a)} forbids any realized value from falling inside two
tolerance bands, so each realization is correct on at most one condition and
contributes at most $\max_j p_j$; a randomized policy is a mixture and cannot
exceed the maximum of its components. \Cref{app:blindproof} gives the full
argument and
shows neither hypothesis can be dropped.
\end{proofsketch}

This is the one result here that assumes nothing about $\Pi$, which
is what lets it bound the very policies \cref{prop:cauchy} excludes.
Hypothesis \emph{(b)} is not decorative: if the condition label were inferable
from the rest of the input, a blind policy could read it off and exceed the
bound. Our design uses $k=3$ conditions ($\alpha\in\{1,\tfrac12,2\}$) in equal
shares
at $\tau=0.1$, whose pairwise separations are $0.50$, $0.50$ and $0.75$ against
a requirement of $2\tau=0.2$, so the blind bound is exactly $1/3$. Being a
supremum, falling below it means scoring under the \emph{best} blind strategy
rather than under every one. The bound also depends on the condition set
actually scored: dropping the base condition makes the design $k=2$ and the
bound $1/2$; and it scopes what a null can claim, since a one-sided exact
binomial test of $\cg$ against it reaches $80\%$ power only for a true
grounding rate of $0.411$ or more at $n=243$, and $0.468$ to $0.509$ once
elicitations are clustered within the $27$ clips (\cref{app:blindproof}).

%%%%%%%%%%%%%%%%%%%%%%%%%%%%%%%%%%%%%%%%%%%%%%%%%%%%%%%%%%%%%%%%%%%%%%%%%%%%%%%%
\section{Protocol, dataset, and design}
\label{sec:method}

Testing the two results needs a task where the grounded exponent
$\bt^\star$ is known before any model is run and the target is fixed by
dimensional analysis rather than by a rater. A droplet falling past a printed
ruler supplies both, and it makes all three relations physically realizable on
the same clip.

\paragraph{Setup}
A VLM sees a video of a physical event with an in-frame scale reference of
known true calibration $c^\star$ and is asked to report a quantity $q$; a
classical computer-vision pipeline \citep{horn1981optical} supplies $q^\star$
from the same video. Tolerance-band accuracy
$\acc@\tau=\Pr[|\hat q-q^\star|/|q^\star|\le\tau]$ mirrors existing benchmarks
and is not our contribution.

A relation's ideal grounded ratio $\rho$ follows from
$\bt^\star$: the observable to compare against it is $\log r$ itself, which by
\cref{prop:cauchy} is the linear functional $\langle\bt,\bu\rangle$
the relation measures. For $R_\alpha$ and $T_\beta$ that functional is a single
coordinate of $\bt$; for $S_s$ it is the difference $\theta_1-\theta_2$, which
is why the resampling response is read against a target of $1$ rather than
against a coordinate. We report $\log r$ directly rather than a bounded
rescaling of it, so
that \cref{thm:rank} applies to the reported quantity without a change
of variable; \cref{app:sfs} defines the bounded summary used in the supporting
analyses and derives its own blind-policy bound.

\begin{definition}[The three relations]
\label{def:cond}
\emph{Relabeling} $R_\alpha$ presents a scale asserting $\alpha c$ with the
pixel content unchanged. \emph{Resampling} $S_s$ rescales the frame by
$s\neq1$, object and ruler together, with the ruler's asserted physical length
held fixed. \emph{Temporal relabeling} $T_\beta$ asserts a frame rate $\beta$
times the true one with the clip and the spatial calibration unchanged. Their
log-perturbation vectors are $(0,\log\alpha,0)$, $(\log s,-\log s,0)$ and
$(0,0,\log\beta)$, so $\det U=-\log\alpha\,\log s\,\log\beta\neq0$ whenever
$\alpha,s,\beta\neq1$ and the three identify $\bt$ by
\cref{thm:rank}(i); any two of them do not. The grounded exponent for a
reported velocity is $\bt^\star=(1,1,1)$, so the ideal responses are $\alpha$,
$1$ and $\beta$. The $\beta$ target corrects an earlier statement of this
protocol that gave $\beta^{-1}$: a slower asserted rate places more elapsed
time in the denominator of distance over time, so the inferred velocity falls
rather than rises.

\emph{Policy classes.} Within $\Pi$, $\Grd$ is $\bt=\bt^\star$; $\Bld$,
calibration-blind, is $\theta_2=0$; and $\Cst\subset\Bld$, constant, is
$\bt=0$. Each is a linear condition on $\bt$, so
\cref{thm:rank}(iii) decides which of them a given design can test.
\end{definition}

\Cref{fig:setup} shows the three relations instantiated on one clip.

\begin{figure}[t]
\centering
\kilgraphics{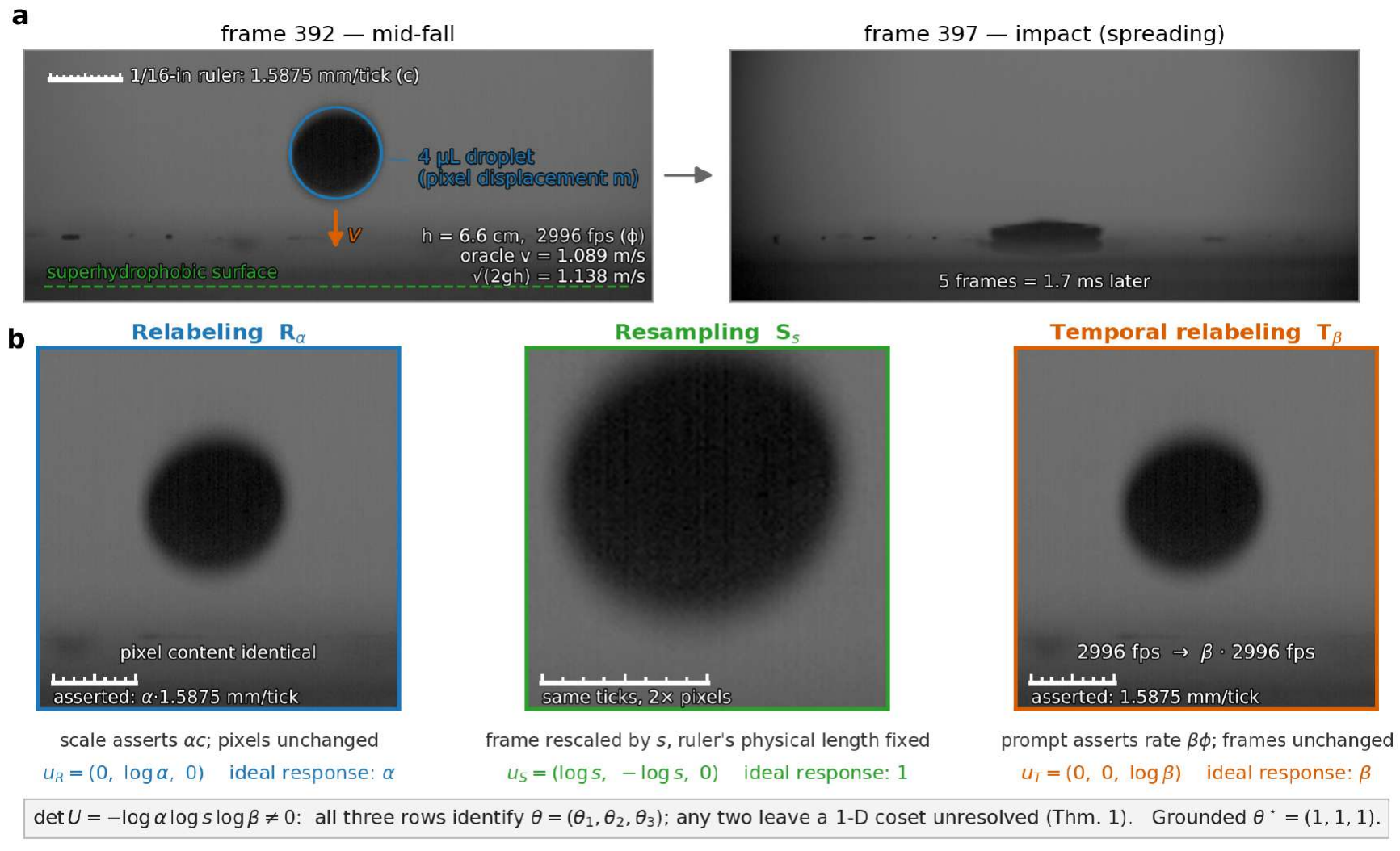}
\caption{\textbf{The identifying design on one clip.}
\textbf{a}, The clip that instantiates $m$, $c$, and $\phi$ (mid-fall and
impact frames; \cref{app:dataset} has full capture detail). \textbf{b}, The
three
relations of \cref{def:cond} applied to it, with each one's
log-perturbation vector and grounded target below its panel.}
\label{fig:setup}
\end{figure}

\paragraph{Dataset and oracle}
The dataset is the instantiation, not the contribution; it exists to make the
two relations physically realizable with an exact oracle. The recordings are
high-speed video ($2996$\,fps) of $4\,\mu$L droplets impacting a
superhydrophobic surface, released from a fixed height ($6.6$\,cm) outside the
field of view, in five sessions of which three survived verification (two
were excluded, for a corrupted calibration clip and for absent provenance).
Of the three, the one used for model evaluation uses a printed ruler
graduated in $1/16$\,inch ($1.5875$\,mm per tick, confirmed by direct pixel
measurement of its alternating long and short marks after an earlier version
of this protocol misidentified it as $1$\,mm graduated); a second, excluded
from model evaluation, uses a pin-comb target at the same pitch; a third,
a genuine $1$\,mm ruler, was lost with the compute environment it was staged
on. Calibrations were measured per session in pixels per tick and converted
using each session's actual graduation, not a shared assumed constant;
\cref{app:dataset} gives the full correction record, including an independent
free-fall cross-check ($v=\sqrt{2gh}$) whose mean residual fell from $40\%$
under the wrong constant to $6.7\%$ corrected. The oracle returned a valid
pre-impact velocity for 48 of 62 clips attempted; model evaluation uses
every high-confidence clip in the surviving ruler session, $27$ rather than
the original $12$-clip two-session subset --- larger, and it improves the
power problem above, but it removes the paper's only cross-session
evidence, so within-model claims concern one physical setup rather than
sessions in general. Full capture details are in \cref{app:dataset}.

\begin{figure}[t]
\centering
\begin{tikzpicture}[font=\scriptsize]
\pgfplotsset{
  every axis/.append style={
    scale only axis, axis lines*=left,
    axis line style={draw=black!45,line width=0.4pt},
    tick style={draw=black!45,line width=0.4pt}, tick align=outside,
    major tick length=1.6pt, minor tick length=1.0pt,
    label style={font=\scriptsize}, tick label style={font=\scriptsize},
    x label style={yshift=2pt}}}

%% ---------------- every response, against its own two targets ----------------
\begin{axis}[name=B,
  width=5.7cm, height=2.75cm,
  xmin=-1.32, xmax=2.55, ymin=0.4, ymax=7.1,
  xtick={-1,0,1,2}, xticklabels={$\times\tfrac12$,$\times1$,$\times2$,$\times4$},
  minor xtick={-0.5,0.5,1.5},
  ytick={1,2,3,4,5,6},
  yticklabels={$T_{\beta=2}$,$T_{\beta=\frac12}$,$S_{s=2}$,$S_{s=\frac12}$,$R_{\alpha=2}$,$R_{\alpha=\frac12}$},
  ytick style={draw=none}, y tick label style={align=right},
  xlabel={response ratio (log scale)}]
  \draw[black!45,dashed,line width=0.6pt] (axis cs:0,0.4) -- (axis cs:0,6.45);
  \node[anchor=south,inner sep=1pt,text=black!55] at (axis cs:0,6.45) {no response};
  \draw[black!22,line width=0.4pt] (axis cs:-1.32,4.5) -- (axis cs:2.55,4.5);
  \draw[black!22,line width=0.4pt] (axis cs:-1.32,2.5) -- (axis cs:2.55,2.5);
  % gap from target to measured
  \draw[black!35,line width=0.5pt] (axis cs:-1,6) -- (axis cs:-0.333,6);
  \draw[black!35,line width=0.5pt] (axis cs:1,5)  -- (axis cs:-0.308,5);
  \draw[black!35,line width=0.5pt] (axis cs:0,4)  -- (axis cs:-0.134,4);
  \draw[black!35,line width=0.5pt] (axis cs:0,3)  -- (axis cs:-0.308,3);
  \draw[black!35,line width=0.5pt] (axis cs:-1,2) -- (axis cs:-0.550,2);
  \draw[black!35,line width=0.5pt] (axis cs:1,1)  -- (axis cs:2.192,1);
  % grounded targets
  \addplot[only marks,mark=o,mark size=1.9pt,black!55,line width=0.5pt]
    coordinates {(-1,6)(1,5)(0,4)(0,3)(-1,2)(1,1)};
  % measured, with cluster-bootstrap 95% intervals (27 clips, 10000 draws)
  \addplot[only marks,mark=*,mark size=1.9pt,wBlue,
    error bars/.cd,x dir=both,x explicit,error bar style={wBlue,line width=0.6pt},
    error mark options={mark size=1.1pt}]
    coordinates {(-0.333,6) +- (0.121,0.089) (-0.308,5) +- (0.081,0.075)};
  \addplot[only marks,mark=*,mark size=1.9pt,wOrange,
    error bars/.cd,x dir=both,x explicit,error bar style={wOrange,line width=0.6pt},
    error mark options={mark size=1.1pt}]
    coordinates {(-0.134,4) +- (0.162,0.183) (-0.308,3) +- (0.081,0.075)};
  \addplot[only marks,mark=*,mark size=1.9pt,wGreen,
    error bars/.cd,x dir=both,x explicit,error bar style={wGreen,line width=0.6pt},
    error mark options={mark size=1.1pt}]
    coordinates {(-0.550,2) +- (0.459,0.380) (2.192,1) +- (0.447,0.369)};
  \node[anchor=east,inner sep=2pt,text=wBlue,font=\scriptsize\itshape,align=right]
    at (axis cs:2.53,5.72) {the same response\\to opposite targets};
  \node[anchor=east,inner sep=2pt,text=wOrange,font=\scriptsize\itshape]
    at (axis cs:2.53,3.52) {target $=$ no response};
  \node[anchor=east,inner sep=2pt,text=wGreen,font=\scriptsize\itshape]
    at (axis cs:2.53,1.52) {overshoots at $\beta{=}2$};
\end{axis}
\end{tikzpicture}
\caption{\textbf{Where the model actually responds}
(Qwen2.5-VL-7B; \cref{tab:e2} has $\cg$ and the other two models' response
ratios). Mean response ratio per relation and magnitude, with $95\%$
cluster-bootstrap intervals, against the grounded target
(\cref{def:cond}, open) and the no-response value (dashed). The
two relabeling magnitudes have targets on opposite sides yet return nearly
the same ratio, so no single exponent fits; the resampling target coincides
with no response, so that relation cannot identify alone; the temporal
relation moves past its target rather than toward it at $\beta{=}2$.}
\label{fig:cg}
\end{figure}

\paragraph{Models and conditions}
We evaluate three models under identical prompts across three paraphrased
seeds on the 27-clip design: Qwen2.5-VL-7B-Instruct \citep{bai2025qwen25vl}
(open-weight, local, prior-generation), Gemini 3.1 Pro Preview (OpenRouter
API, current frontier), and Qwen3-VL-30B-A3B-Instruct (open-weight, local),
chosen to test whether the small-model result is specific to Qwen2.5-VL-7B's
size and generation. Both local models used greedy decoding at pinned
checkpoint revisions; Gemini was queried at temperature $0$ at a
snapshot-resolved OpenRouter endpoint, which cannot offer the guarantee a
checkpoint hash gives. \Cref{app:provenance} lists the revisions, endpoint
slug, and
\$20.85 total API spend.
\textbf{Sanity check}: feed the metric
the oracle value times $\alpha$ and a fixed $1$\,m/s emitter. \textbf{Main
evaluation}: for each model and $\alpha\in\{0.5,2\}$, record $\log r_R$,
$\acc@10\%$
and $\cg$ over 27 clips $\times$ 3 conditions $\times$ 3 seeds, 243 calls per
model. \textbf{Resampling control}: $S_s$, $s\in\{0.5,2\}$ on the same
clips, 162 calls per model. \textbf{Instructed-constant control}: 81
calls per model, no seed sweep since the instruction is the manipulation.
The hidden-prior probe, decoupling analysis, and robustness sweep
are in \cref{app:supporting} and carry no claim in what follows.

\begin{table}[t]
\centering
\caption{\textbf{All three models score far below their own blind bound on
calibration grounding; only Qwen2.5-VL's resampling pair resolves cleanly to
$\Cst$.} Each model defaults to one of a small set of round calibration
values (Qwen2.5-VL splits between $0.5$ and $1.0$\,mm/tick; Gemini and
Qwen3-VL report $1.0$ on nearly every item) that never matches the true
$1.5875$\,mm/tick the scale asserts in any condition, so $\cg$ sits at or
near zero uniformly rather than tracking the blind bound. The
instructed-constant control shows the design cannot separate any of the
three from a policy explicitly told to ignore the video. $\cg$, calibration
grounding; $r_R,r_S,r_T$, response ratios under relabeling, resampling, and
temporal relabeling.}
\label{tab:e2}
\small
\setlength{\tabcolsep}{4pt}
\renewcommand{\arraystretch}{1.12}
\resizebox{\linewidth}{!}{%
\begin{tabular}{@{}l r r r r@{}}
\toprule
\kilth{Quantity} & \kilth{Blind/constant bound} & \kilth{Qwen2.5-VL-7B} & \kilth{Gemini 3.1 Pro} & \kilth{Qwen3-VL-30B-A3B} \\
\midrule
\multicolumn{5}{@{}l}{\emph{Main evaluation (relabeling $R_\alpha$), $n=243$; $240^{h}$; $243$}}\\
\quad $\cg$                          & $0.333$ exact (\cref{thm:blind}) & $0.000$ $[0.000,0.016]^{a}$ & $0.004$ $[0.001,0.023]^{a}$ & $0.000$ $[0.000,0.016]^{a}$ \\
\addlinespace[2pt]
\multicolumn{5}{@{}l}{\emph{Resampling control ($S_s$), $n=81$ each}}\\
\quad $r_R$ (mean; $\alpha=\tfrac{1}{2}$, $2$) & ideal grounded $0.5$, $2.0$ & $0.794$, $0.808$ & $1.610$, $1.653$ & $1.059$, $1.048$ \\
\quad $r_S$ (mean; $s=\tfrac{1}{2}$, $2$)      & ideal grounded $1.0$, $1.0$ & $0.911$, $0.808$ & $1.474$, $1.021$ & $0.939$, $1.588$ \\
\quad $r_T$ (mean; $\beta=\tfrac{1}{2}$, $2$)  & ideal grounded $0.5$, $2.0$ & $0.683$, $4.569$ & not run & not run \\
\quad classification$^{g}$ & --- & $\Cst$ & unclassified$^{i}$ & mixed$^{j}$ \\
\addlinespace[2pt]
\multicolumn{5}{@{}l}{\emph{Instructed-constant control, $n=81$ each}}\\
\quad $\cg$          & $0.333$ exact & $0.333$ $[0.240,0.441]^{a}$ & $0.333$ $[0.240,0.441]^{a}$ & $0.333$ $[0.240,0.441]^{a}$ \\
\quad compliance     & --- & $81/81$ & $81/81$ & $81/81$ \\
\addlinespace[2pt]
\multicolumn{5}{@{}l}{\emph{Reference: task accuracy, reported for context and not a grounding claim}}\\
\quad $\acc@10\%$ ($n=81$; $80$; $81$) & --- & $0.000$ $[0.000,0.045]^{c}$ & $0.100$ $[0.052,0.183]^{c}$ & $0.000$ $[0.000,0.045]^{c}$ \\
\quad $\acc@30\%$                      & --- & $0.074$ & $0.313$ & $0.037$ \\
\bottomrule
\end{tabular}}
\tabnote{$^{a}$Wilson interval.
$^{c}$Clopper--Pearson exact.
$^{g}$Per \cref{thm:rank}: $(\alpha,1)$ for $\Grd$, $(1,{\neq}1)$
for $\Bld\setminus\Cst$, $(1,1)$ for $\Cst$; descriptive outside $\Pi$.
$^{h}$Gemini's $n$ is 240: 3 items failed to parse.
$^{i,j}$Per-item classification breakdowns in \cref{app:supporting}.}
\end{table}

%%%%%%%%%%%%%%%%%%%%%%%%%%%%%%%%%%%%%%%%%%%%%%%%%%%%%%%%%%%%%%%%%%%%%%%%%%%%%%%%
\section{Results}
\label{sec:results}

Every quantity below is read against a null the previous section fixes:
$\cg$ against the blind bound of \cref{thm:blind}, and each response
ratio against both its grounded target and the value a policy ignoring that
input would give. The sanity check establishes neither: the constructed faithful
re-measurer returns the ideal ratio and the fixed-prior emitter returns $1$,
but the oracle cancels in that ratio, so the result holds by arithmetic for
\emph{any} oracle and checks the implementation rather than the oracle
(\cref{app:supporting}).

\paragraph{A score far below its own blind bound, from a policy that reads nothing}
\Cref{tab:e2} reports the main results, and the headline is uniform
failure rather than a ranking. All three models score at or near zero on
$\cg$, well under the blind bound of $1/3$: Qwen2.5-VL $0/243$, Gemini
$1/240$, Qwen3-VL $0/243$. \textbf{We do not read this as evidence against
grounding either}; a design that cannot separate a calibration-blind policy
from a constant one (\cref{cor:mtpair}) cannot conclude much from
either a high or a low score. The per-condition breakdown shows why the score
is uniformly low rather than tracking the condition mix: each model reports
one of a small set of fixed calibrations regardless of what is asserted ---
Qwen2.5-VL splits between $0.5$ and $1.0$\,mm/tick ($214$ and $27$ of $243$
items), Gemini and Qwen3-VL report $1.0$ on nearly every item ($236/240$ and
$243/243$) --- and none of these defaults falls within $\tau$ of the true
$1.5875$\,mm/tick asserted in any condition, which is what
\cref{cor:mtpair} says a relabeling-only design cannot rule out
either way. Qwen's reported velocity shows the same collapse, $1.5$\,m/s on
$214$ of $243$ items, with tolerance-band accuracy $0/81$ at $\acc@10\%$ and
$6/81$ at $\acc@30\%$; three of Gemini's 243 items ($1.2\%$) failed to parse
and are excluded. By \cref{cor:mtpair} the relabeling coordinate
alone cannot separate calibration-blind from constant for any of the three,
and the corrected numbers make that indistinguishability the empirical
outcome for all three rather than two of three.

\paragraph{The resampling control resolves the spatial coordinates directly}
Here $r_R\approx r_S\approx0.8$ for Qwen2.5-VL: both close to $1$ and close
to each other, the $(1,1)$ signature of $\Cst$, not the $(1,{\neq}1)$
signature of $\Bld\setminus\Cst$. This policy lies outside $\Pi$ (below), so
we read that as a description of its aggregate behavior rather than an
identification under \cref{thm:rank}, which quantifies over $\Pi$ ---
the first point where the classification is a direct empirical reading
rather than a bound the model might or might not clear. For Gemini the same
pair does not resolve, its per-item $r_R$ ranging up to $5.5$, so noise
rather than a competing signature prevents classification. For Qwen3-VL the
aggregate ratios move away from $1$, ruling out pure $\Cst$; per-item
classification splits near-evenly at $\alpha/s{=}0.5$ and gives a
\emph{calibration-blind} majority at $\alpha/s{=}2$ (\cref{tab:e2},
notes $i,j$).

\paragraph{The instructed-constant control shows the design cannot tell incidental from strategic insensitivity}
A system prompt instructing the model to report $1.5$\,m/s and the session's
own true base-condition calibration regardless of video content produced
\emph{exact} compliance for all three models: $81/81$ items returned the
instructed values, giving $\cg=27/81=0.333$, exactly the blind bound by
construction and now \emph{higher} than any uninstructed model's (Fisher
exact $p<10^{-17}$ against Qwen's $0/243$), simply because the instructed
value matches the design's own base condition. That sharpens rather than
weakens the point: the design still cannot separate this deliberately
insensitive policy from a model that happens to read the scale correctly
only on the base condition. We do not claim any model is sandbagging; a
change-expecting relation cannot settle it either way.

\paragraph{Where the response sits, and why the temporal axis cannot be summarized by one exponent either}
With all three relations run for Qwen2.5-VL, \cref{def:cond} gives a
full-rank design, so $\bt$ is identifiable in principle. It is not recoverable
in practice here, and the way it fails is itself the result. Solving
$\log r=\theta_2\log\alpha$ separately at the two relabeling magnitudes
(\cref{fig:cg}) gives $\theta_2=+0.33$ and $\theta_2=-0.31$:
opposite signs, near zero, which is what a near-constant output with
condition-wise drift produces and a grounded policy ($\theta_2=1$) does not.
No single exponent fits both, so this policy lies outside $\Pi$ and, by
\cref{prop:cauchy}, its aggregate response ratio is not a property
of the policy alone. Resampling behaves likewise, moving the report toward
neither target. The temporal axis moves furthest of the three, but not
consistently toward the target: the mean realized ratio overshoots at
$\beta{=}2$ ($4.569$ against target $2.0$) while the median undershoots at
$\beta{=}0.5$ and sits near the blind value at $\beta{=}2$, the same
mean/median instability documented below for Gemini's spatial ratios, so the
temporal coordinate's direction cannot be pinned to one summary statistic.
What is stable is only that its magnitude moves further than either spatial
coordinate's, and it has no exponent to be a component of, since the
relabeling inconsistency above already places this policy outside $\Pi$.
\Cref{app:loglinscope} gives all six response ratios with intervals; the
corrected
calibration constant changes the temporal ratios specifically because
$T_\beta$'s prompt asserts that value as text, unlike the purely visual
spatial relations.

%%%%%%%%%%%%%%%%%%%%%%%%%%%%%%%%%%%%%%%%%%%%%%%%%%%%%%%%%%%%%%%%%%%%%%%%%%%%%%%%
\section{What this design cannot identify}
\label{sec:threats}

An evaluation used to license a decision should state what it cannot
license, starting with the design's own completeness: only Qwen2.5-VL has
all three relations run, so only there is $U$ full rank.
For the other two, $\operatorname{rank}U=2<3$, and were they in $\Pi$
\cref{thm:rank}(ii) would confine the claim to the coset
$\bt+\operatorname{span}\{e_3\}$: their temporal axis is unperturbed, so no
statement about it is available either way. All of this rests on the single
surviving session.

\paragraph{A text-calibration positive control: the failure is not only perceptual}
The instructed-constant control supplies the negative control, and $T_\beta$
shows the same model does move
on some axis, which is what makes the spatial nulls informative rather than
uniform. But no positive control existed on the spatial axes themselves before
this experiment: no evaluated model had produced a spatial response
distinguishable from no response, so the design had been shown to detect the
\emph{absence} of spatial grounding and not its presence. We supplied the true
calibration as text instead of a scale-reference video, on the same 27 clips,
removing the perceptual step while leaving the pixel-to-physical conversion
step intact, and ran both $R_\alpha$ and $S_s$ on the result (243 and 162
calls per model). We recorded the prediction in advance: \cref{thm:rank}
predicts $(\alpha,1)$ for a policy that converts, against the $(1,1)$ measured
uninstructed.

\textbf{No model produced the predicted signature.} All three classify
\texttt{unclassified} at both magnitudes under mean and median alike, with
one near-miss: Qwen3-VL-30B-A3B's median relabeling ratio at $\alpha{=}2$ is
$2.00$, exactly the grounded target, but its paired resampling median
($1.51$) is too far from $1.0$ to register as grounded (\cref{app:supporting}
has the
full per-model ratios and zero-response rates, all at or above each model's
video-based rate). \textbf{What this buys the paper.} Removing the
perceptual step did not produce a single grounded response, on three models
spanning two families and two capability tiers, which rules out one
explanation the null result could not previously exclude: the uniform
failure to detect spatial grounding is not solely because these models
cannot read a ruler. At least as much of it sits in the conversion step
itself, or in unfamiliarity with this text-plus-video prompt format, a
distinction the earlier design could not draw and this one does not isolate
either, since that needs a control varying prompt format alone.

\paragraph{Legibility and oracle error}
Unresolvable graduations would make $\cg$ measure legibility rather than
grounding. The temporal-sufficiency half of this control holds ($98.5$--$100\%$
of each clip's frames ingested); the static-legibility half does not:
queried on a still frame at both native and delivered resolution, the model
reported $1$\,mm/tick, a $37\%$ error against the true $1.5875$, consistent
with the same misreading persisting under video (\cref{tab:e2}). Oracle
error is likewise governed by the same correction: across the 41 tracked
clips in the two graduation-corrected sessions, the relative residual
against free fall has median $0.043$ and 90th percentile $0.081$, both under
$\tau=0.10$ (mean $0.067$ for the 27-clip evaluation session alone) --- versus
a mean $0.402$ under the uncorrected constant, which is what first flagged
the error. The $\tau$-sweep, recomputed against the corrected true
velocities in \cref{app:supporting}, resembles the originally hypothesized
decoupled
pattern for none of the three models at any $\tau$. Twenty-seven clips, one
phenomenon, three models, one temporal axis.

\paragraph{What survives these limits}
\Cref{cor:mtpair} and \cref{thm:blind} are statements about
design matrices, not about droplets: any evaluation that perturbs one input
and reads a flat response is in the position described here, the remedy is
a second relation independent of the first rather than a sharper version of
the same one, and the blind bound is computable for any tolerance-scored
design before a model is run. What the instantiation adds is the
demonstration that a score, whether it clears that bound or, as here, sits
far under it, can come from a policy indistinguishable, on every quantity
this design measures, from one instructed to ignore the video.

%%%%%%%%%%%%%%%%%%%%%%%%%%%%%%%%%%%%%%%%%%%%%%%%%%%%%%%%%%%%%%%%%%%%%%%%%%%%%%%%
\section{Conclusion}
\label{sec:conclusion}

A perturbation-based evaluation identifies which inputs a system uses exactly
when its log-perturbation vectors span the input space, so one relation is
never enough and a sharper version of the same relation never substitutes for
an independent one. The score such a test awards a policy that reads nothing
is computable in closed form before any model is run. Running a complete
identifying set on three vision--language models, all three sit far below that
bound and none is separable from the same model instructed to ignore the video.

%%%%%%%%%%%%%%%%%%%%%%%%%%%%%%%%%%%%%%%%%%%%%%%%%%%%%%%%%%%%%%%%%%%%%%%%%%%%%%%%
%% BACK MATTER
%%%%%%%%%%%%%%%%%%%%%%%%%%%%%%%%%%%%%%%%%%%%%%%%%%%%%%%%%%%%%%%%%%%%%%%%%%%%%%%%
\begin{kilrepro}
\fn{sgf\_metrics.py} (numpy only) builds $U$ and computes every quantity
reported here; the renderer, prompts, oracle scripts, inference clients,
per-item outputs, and a reconciliation script that checks every table cell
against the raw outputs are in the Code and Data Supplement
(\cref{app:provenance} has the reconciliation and provenance detail), and the
AAAI-27 reproducibility checklist is submitted separately.
\end{kilrepro}

% NOTE: the source paper's author block is a placeholder and carries no
% author-contributions statement. Left commented rather than invented.
% \begin{contributions}
% \end{contributions}

% NOTE: no acknowledgments section exists in the source paper.
% \begin{acknowledgments}
% \end{acknowledgments}

% NOTE: no funding statement exists in the source paper.
% \begin{funding}
% \end{funding}

\begin{availability}
% This section SURVIVES blind mode (most venues require it), so the URL is
% anonymized explicitly -- \ifbool{KIL@blind}{...}{...} does it automatically.
\fn{sgf\_metrics.py}, the renderer, prompts, oracle scripts, inference clients,
per-item outputs and reconciliation script are available at
\ifbool{KIL@blind}
{\url{https://anonymous.4open.science/r/XXXXXX}}
  {\url{https://github.com/KurbanIntelligenceLab/one-perturbation-is-not-enough}}.
Model provenance, checkpoint revisions and compute are in
\cref{app:provenance}.
\end{availability}

\begin{conflicts}
This protocol is
correspondingly a diagnostic, not a certification: by \cref{thm:rank}
an incomplete design constrains $\bt$ only modulo $\ker U$, so a low score
does not distinguish the policies it confuses and a high score is evidence
only about the span perturbed, and a measure mistaken for a certificate is
worse than none. The data are physical measurements with no human subjects.
AI tools assisted drafting; the authors are responsible for all content.
\end{conflicts}

%%%%%%%%%%%%%%%%%%%%%%%%%%%%%%%%%%%%%%%%%%%%%%%%%%%%%%%%%%%%%%%%%%%%%%%%%%%%%%%%
\bibliography{references}

%%%%%%%%%%%%%%%%%%%%%%%%%%%%%%%%%%%%%%%%%%%%%%%%%%%%%%%%%%%%%%%%%%%%%%%%%%%%%%%%
\appendix

\noindent This appendix supplies the proofs whose main-text statements carry a
pointer here, together with the tightness analyses, the failure modes of the
standing assumption, and the supporting experiments that carry no claim in the
main paper. \Cref{cor:mtpair} follows from \cref{thm:rank} in one line and is
not reproved here; every other result has its full proof below, and every
section below is referenced from the main body.

\section{Proof of the scope proposition}
\label{app:scope}

Let $\pi:\R_{>0}^{n}\to\R_{>0}$ be measurable.

\begin{proof}[Proof of \cref{prop:cauchy}]
\emph{Sufficiency.} If $\pi(v)=\kappa\prod_i v_i^{\theta_i}$ then
$\pi(av)/\pi(v)=\prod_i a_i^{\theta_i}$, independent of $v$, and taking
logarithms gives $\log r=\langle\bt,\log a\rangle$.

\emph{Necessity.} Suppose $\pi(av)/\pi(v)=h(a)$ for all $v$ and all $a$. Put
$g=\log\pi\circ\exp$ componentwise, so $g:\R^{n}\to\R$ is
measurable and $g(x+y)-g(x)=\log h(e^{y})$ for all $x,y$. Fixing $x=0$ gives
$\log h(e^{y})=g(y)-g(0)$, and substituting back yields
$g(x+y)=g(x)+g(y)-g(0)$. Hence $\tilde g:=g-g(0)$ is additive and measurable,
so by the Cauchy--Banach--Steinhaus argument for measurable additive functions
it is linear: $\tilde g(x)=\langle\bt,x\rangle$ for some
$\bt\in\R^{n}$. Exponentiating, $\pi(v)=\kappa\prod_i v_i^{\theta_i}$
with $\kappa=e^{g(0)}$.
\end{proof}

\begin{remark}
Measurability is not decorative: without it the Cauchy equation admits
pathological additive solutions. It is satisfied by any policy realizable as a
deterministic decoding of a finite computation, which covers every system we
evaluate. Outside $\Pi$ the response ratio depends on the operating point, so
an aggregate ratio reports the item mix as much as the policy; the released
code checks this by evaluating the ratio at several operating points.
\end{remark}

\section{Proof of the identifiability theorem}
\label{app:rank}

\begin{proof}[Proof of \cref{thm:rank}]
By \cref{prop:cauchy} the record of $\bt$ under relations with
log-perturbation
vectors $\bu_1,\dots,\bu_k$ is $\log\mathbf{r}=U\bt$, where $U$ has rows
$\bu_j^{\top}$.

\emph{(ii)} $U\bt=U\bt'$ iff $U(\bt-\bt')=0$ iff $\bt-\bt'\in\ker U$, so the
set of exponents producing the same record as $\bt$ is exactly the coset
$\bt+\ker U$, an affine subspace of dimension $n-\operatorname{rank}U$.

\emph{(i)} The record determines $\bt$ iff that coset is a single point, iff
$\ker U=\{0\}$, iff $\operatorname{rank}U=n$. Since $\operatorname{rank}U\le k$,
this forces $k\ge n$; and if $\bu_1,\dots,\bu_n$ are linearly independent then
$\operatorname{rank}U=n$ and the record determines $\bt$.

\emph{(iii)} The quantity $\theta_i=\langle e_i,\bt\rangle$ is determined by the
record iff it is constant on every coset $\bt+\ker U$, iff
$\langle e_i,z\rangle=0$ for all $z\in\ker U$, iff
$e_i\in(\ker U)^{\perp}=\operatorname{row}(U)$. So a design certifies whether
input $i$ is used exactly when $e_i$ lies in the row space of $U$.
\end{proof}

\begin{remark}[Recovering the earlier statements]
For $n=2$ with relabeling alone, $\bu_R=(0,\log\alpha)$ and
$\ker U=\operatorname{span}\{e_1\}$, so the constant exponent $(0,0)$ and any
tracking-but-calibration-blind exponent $(t,0)$ share a record: this is the
non-identifiability of a change-expecting relation. With resampling alone,
$\bu_S=(\log s,-\log s)$ and $\ker U=\operatorname{span}\{(1,1)\}$, which
merges the grounded exponent $(1,1)$ with $(0,0)$. The witness
$f(x)=x^{\theta}$ with $\theta=\log(ab)/\log a$, used in earlier statements of
this result to exhibit a collision for an arbitrary single relation $(a,b)$, is
exactly a translate of the grounded exponent along $\ker U$; the theorem
produces it mechanically for any $n$ and any relation set, and says when no
such witness exists.
\end{remark}

\section{Proof of the blind bound, the power calculation, and tightness}
\label{app:blindproof}

\begin{proof}[Proof of \cref{thm:blind}]
Let the design present conditions $j=1,\dots,k$ with item shares $p_j$ summing
to $1$, condition $j$ having true asserted calibration $\alpha_j c$ with
$\alpha_1=1$, and assume $\tau$-separation:
$|\alpha_j-\alpha_{j'}|/\max(\alpha_j,\alpha_{j'})>2\tau$ for $j\neq j'$.

A calibration-blind policy's reported $\hat c$ has a law that does not depend on
$j$, since $j$ enters the input only through the asserted calibration. Condition
on a realized value $v$. Scoring is $\acc@\tau$, so $v$ is counted correct on
condition $j$ exactly when $|v-\alpha_j c|\le\tau\alpha_j c$.

Suppose $v$ were correct on two conditions $j\neq j'$. Then
\begin{align*}
|\alpha_j-\alpha_{j'}|\,c &\le |v-\alpha_j c| + |v-\alpha_{j'} c|\\
&\le \tau(\alpha_j+\alpha_{j'})c
\le 2\tau\max(\alpha_j,\alpha_{j'})c,
\end{align*}
using the triangle inequality and then $\alpha_j+\alpha_{j'}\le
2\max(\alpha_j,\alpha_{j'})$. Dividing by $\max(\alpha_j,\alpha_{j'})c>0$
contradicts $\tau$-separation. Hence each realized $v$ is correct on at most one
condition, so $\E[\cg\mid \hat c=v]\le\max_j p_j$.

A randomized blind policy is a mixture over such $v$, and
$\E[\cg]=\int \E[\cg\mid \hat c=v]\,d\mu(v)\le\max_j p_j$,
because a mixture cannot exceed the supremum of its components. Randomization
therefore does not help, which is what makes the bound a statement about the
whole class rather than about one strategy.

The bound is attained: reporting the constant $v=\alpha_{j^\star}c$ with
$j^\star=\arg\max_j p_j$ is correct on every item of condition $j^\star$ and so
achieves $\max_j p_j$ exactly. Under balance $p_j=1/k$ and the supremum is
$1/k$.
\end{proof}

\paragraph{Power calculation}
Consider a one-sided exact binomial test of $H_0:\cg=p_0$ against
$H_1:\cg>p_0$ at level $a$ with $n$ independent elicitations. The critical count
is the smallest $k^\star$ with
$\sum_{i=k^\star}^{n}\binom{n}{i}p_0^{i}(1-p_0)^{n-i}\le a$, and the power at an
alternative $p_1$ is
$\beta(p_1)=\sum_{i=k^\star}^{n}\binom{n}{i}p_1^{i}(1-p_1)^{n-i}$.
For the current $n=243$, $p_0=1/3$, $a=0.05$ design, $\beta$ crosses $0.80$ at
$p_1=0.411$. Elicitations are clustered within clips, so replacing $n$ by
$n_{\mathrm{eff}}=\lfloor n/\mathrm{deff}\rfloor$ with
$\mathrm{deff}=1+(\bar m-1)\rho$ and $\bar m=9$ gives, for $\rho=0.2$,
$\mathrm{deff}=2.6$, $n_{\mathrm{eff}}=93$, and a requirement of $0.468$; for
$\rho=0.5$, $\mathrm{deff}=5.0$, $n_{\mathrm{eff}}=48$, and $0.509$. We did not
estimate $\rho$ from our data and give this range to indicate direction and
rough size, not as a measurement. Note that $k^\star$ is an integer, so the
requirement is not monotone in $n_{\mathrm{eff}}$ and must be read at the exact
value used. All figures are reproduced by \fn{min\_detectable\_rate} in the
released code. Earlier statements of this protocol printed $0.50$ and $0.58$ for
the
clustered requirement; those do not reproduce under the method above and are
superseded by the values in this paragraph, both regenerated from
\fn{min\_detectable\_rate}.

\section{Scope of the log-linear model, and the temporal relation}
\label{app:loglinscope}

\Cref{prop:cauchy} restricts the identifiability analysis to log-linear
policies,
the family for which a response ratio describes the policy rather than the item
mix. The restriction bites in two identifiable ways, and \cref{thm:blind} is
stated so
as to cover the excluded cases.

\textbf{Policies outside $\Pi$.} A saturating or additive policy has a response
ratio that moves with the operating point, so its aggregate ratio is not a
property of the policy. \Cref{thm:rank} then constrains only the log-linear
projection of its behavior, and the blind bounds of \cref{thm:blind} and
\cref{prop:sfsbound}, which
make no log-linearity assumption, are what covers the rest. The released code
flags this case by evaluating the ratio at several operating points.

\textbf{Mixed-dimension reports.} If a model reports a quantity that is not
dimensionally homogeneous, for instance a Reynolds or Weber number combining
length, velocity, and material constants, the grounded exponent is not the
all-ones vector and must be recomputed from the quantity's definition;
$\bt^\star$ is a property of the reported quantity, not of the protocol.

\textbf{Temporal coupling.} Relabeling holds the frame rate fixed, so the time
axis is unperturbed and velocity has $d=1$. A model that misestimates the frame
rate introduces an error relabeling cannot reveal, because that error is common
to both conditions and cancels in the ratio. The temporal analogue $T_\beta$
relabels the asserted frame rate instead: since there is no in-frame
``frame-rate ruler,'' the asserted rate is given directly in the prompt
alongside the true spatial calibration, and only the claimed playback rate
changes, by $\beta\in\{0.5,2\}$ relative to the true $15$\,fps ingestion rate.
A faithful measurer's velocity must then scale by $\beta$, giving
$\theta_3=+1$: a slower asserted rate places more elapsed time in the
denominator of distance over time, so the inferred velocity falls. Earlier
statements of this protocol gave $\beta^{-1}$, which is a sign error.

\textbf{All six response ratios for Qwen2.5-VL, with intervals.} Relabeling
moves the report by $0.794$ at $\alpha{=}0.5$ (log$_2$ mean $-0.332$, cluster
$95\%$ $[-0.453,-0.243]$) and $0.808$ at $\alpha{=}2$ ($-0.307$, cluster
$[-0.388,-0.232]$). Resampling moves it by $0.911$ ($-0.135$, cluster
$[-0.297,0.049]$) and $0.808$ ($-0.307$, cluster $[-0.388,-0.232]$) against a
target of $1$. All intervals are cluster-bootstrap over the 27 clips,
$10{,}000$ draws, and are plotted in \cref{fig:cg}.

We ran $T_\beta$ for Qwen2.5-VL-7B-Instruct (243 items, 0 parse failures),
with the calibration value asserted in the prompt corrected to $1.5875$\,mm
per tick (\cref{sec:method}; the temporal relation states this value
as text rather than showing it visually, so it is the one relation whose
recorded responses change under the correction). At $\beta{=}0.5$ (target
$0.5$) the mean realized ratio is $0.683$ (log$_2$ mean $-0.550$, cluster
$95\%$ $[-1.009,-0.170]$) and at $\beta{=}2$ (target $2.0$) it is $4.569$
($2.192$, cluster $[1.745,2.560]$); both intervals exclude the no-response
value. At $\beta{=}0.5$ the mean sits between the blind value $1$ and the
grounded target, nearer the target in log distance ($0.144$ versus
$0.550$); at $\beta{=}2$ the mean overshoots the target rather than falling
short of it. The median tells a different story at each magnitude: $0.264$
at $\beta{=}0.5$, undershooting past the target, and $1.251$ at $\beta{=}2$,
sitting close to the blind value rather than the target. This mean/median
disagreement is the same instability documented in the main text for
Gemini's spatial ratios, and it means the temporal coordinate's direction
relative to grounded cannot be read off a single summary statistic; what is
robust to the choice of statistic is only that its magnitude moves further
from $1$ than either spatial coordinate's does.

Recomputed against the corrected $\beta$ target and the corrected
calibration, $\sfs$ is $0.028$ at $\beta{=}0.5$ (cluster $95\%$
$[0.017,0.041]$) and $0.193$ at $\beta{=}2$ (cluster $[0.144,0.238]$), both
against this condition's blind bound of $0.332$. Both remain well below the
blind bound despite the corrected target, because $\sfs$ scores each item
individually and the per-item ratios vary enough around their means that few
items land close to the exact target even though the \emph{mean} ratio
sometimes does; this is the same mean-versus-per-item gap noted for the
spatial relations above, and it is why the main paper reports mean log-ratios
directly rather than this bounded summary. The asymmetry between the two
magnitudes (roughly sevenfold) mirrors the mean/median disagreement noted
above: $\beta{=}2$'s per-item ratios cluster closer to the target on the side
$\sfs$ credits, despite the aggregate mean overshooting it, than
$\beta{=}0.5$'s do. The values reported before the calibration correction
($0.012$ at $\beta{=}0.5$, $0.034$ at $\beta{=}2$, themselves superseding an
earlier pair scored against the incorrect $\beta^{-1}$ target) are superseded
by the pair above; the qualitative reading is unchanged, since all four
values sit well below the blind bound.
Extending $T_\beta$ to the other two models, and pairing it with a temporal
analogue of $S_s$, is left for future work.

\section{A bounded summary score, and its own blind bound}
\label{app:sfs}

The main paper reports $\log r$ directly. The supporting analyses in
\cref{app:supporting}
use a bounded rescaling of it, retained from an earlier version of this
protocol because it is comparable across relations: for a relation with ideal
grounded ratio $\rho$ and realized ratio $r$,
$\sfs=\mathrm{clip}_{[0,1]}(1-|\log r-\log\rho|/|\log\rho|)$, averaged over
items. It carries its own null.

\begin{proposition}[Blind-policy bound for $\sfs$]
\label{prop:sfsbound}
Let a policy whose report is independent of the perturbed input be queried once
under the base condition and once under the relation, with the two reports
exchangeable. Write $T=|\log\rho|>0$ and $\delta=\mathrm{sgn}(\log\rho)\log r$.
Then $\E[\sfs]\le\E|\delta|/(2T)$, with equality exactly when
$\delta\le T$ almost surely.
\end{proposition}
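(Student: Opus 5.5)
The plan is to rewrite $\sfs$ as a fixed piecewise-linear function of $\delta$ alone, then use exchangeability to symmetrize.

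\emph{Reduction to one variable.} Since $\log\rho=\mathrm{sgn}(\log\rho)\,T$, we have $|\log r-\log\rho|=|\delta-T|$. Hence $\sfs=g(\delta)$ with
\[
g(x):=\mathrm{clip}_{[0,1]}\bigl(1-|x-T|/T\bigr).
\]
Evaluating this piecewise gives $g(x)=x/T$ on $[0,T]$, $g(x)=2-x/T$ on $[T,2T]$, and $g(x)=0$ elsewhere. Two facts follow directly. First, $g$ vanishes on $(-\infty,0]$. Second, $g(x)\le x/T$ for every $x\ge0$, with equality exactly when $x\le T$; on $(T,2T]$ we have $2-x/T<x/T$, and beyond $2T$ the left side is $0<x/T$.

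\emph{Symmetry from blindness.} Write the two reports as $X$ (base condition) and $X'$ (under the relation), so $\log r=\log X'-\log X$. Because the report does not depend on the perturbed input and the two reports are exchangeable, $(\log X,\log X')$ and $(\log X',\log X)$ have the same law. Therefore $\log r$, and hence $\delta$, is symmetric about $0$: $\delta\overset{d}{=}-\delta$. This gives
\[
\E[\sfs]=\E[g(\delta)]=\tfrac12\E\bigl[g(\delta)+g(-\delta)\bigr].
\]
For any $x$, at most one of $g(x)$ and $g(-x)$ is nonzero, because $g$ vanishes on nonpositive arguments. So $g(x)+g(-x)=g(|x|)$, and therefore
\[
\E[\sfs]=\tfrac12\E\bigl[g(|\delta|)\bigr]\le\E|\delta|/(2T).
\]

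\emph{Equality case.} By the pointwise comparison in the first step, equality holds iff $g(|\delta|)=|\delta|/T$ almost surely, i.e.\ iff $|\delta|\le T$ almost surely. By the symmetry of $\delta$, this is equivalent to $\delta\le T$ almost surely, which is the condition stated. One also needs $\E|\delta|<\infty$ for the bound to be informative; if $\E|\delta|=\infty$ the inequality holds trivially.

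The main obstacle is not any computation but getting the symmetry hypothesis right. "Report independent of the perturbed input" alone only gives $X\overset{d}{=}X'$, which is not enough to make $\log X'-\log X$ symmetric. The symmetry genuinely needs the exchangeability of the joint law of the two reports, and this is why the proposition assumes exchangeability explicitly. I would state that step carefully and note that a deterministic blind policy is the degenerate case $\delta=0$, where both sides of the bound are $0$.
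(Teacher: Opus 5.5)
Your proof is correct and is essentially the paper's argument with the two steps in the opposite order. The paper first bounds $\sfs(\delta)\le\delta^{+}/T$ pointwise for every real $\delta$ and then uses exchangeability to get $\E[\delta^{+}]=\E|\delta|/2$; you first fold by symmetry to $\E[\sfs]=\tfrac12\E[g(|\delta|)]$ and then apply the same comparison on $[0,\infty)$. Your ordering makes the equality case use symmetry a second time, to pass from $|\delta|\le T$ to $\delta\le T$ almost surely, which you supply correctly; the paper gets it directly because $\sfs(\delta)=\delta^{+}/T$ holds exactly on $(-\infty,T]$.
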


\begin{proof}
Write $T=|\log\rho|$ and $\delta=\mathrm{sgn}(\log\rho)\log r$, oriented so the
ideal value is $\delta=T$. From the definition above,
$\sfs(\delta)=\max(0,1-|\delta-T|/T)$. We claim $\sfs(\delta)\le\delta^{+}/T$
pointwise, with $\delta^{+}=\max(\delta,0)$. For $\delta\le0$,
$|\delta-T|=T-\delta\ge T$, so $\sfs=0=\delta^{+}/T$. For $0<\delta\le T$,
$\sfs=\delta/T=\delta^{+}/T$. For $\delta>T$,
$\sfs=\max(0,2-\delta/T)\le\delta/T$, since $2-\delta/T\le\delta/T$ is
equivalent to $\delta\ge T$. Taking expectations gives
$\E[\sfs]\le\E[\delta^{+}]/T$, and exchangeability makes
$\delta$ symmetric about $0$, so $\E[\delta^{+}]=\E|\delta|/2$.
The pointwise bound is an equality on $[0,T]$, which gives the small-noise
equality claim; the simulations below confirm it numerically.
\end{proof}

With $T$ and $\delta$ as in that proof, and directly from
\cref{def:cond},
\begin{align*}
\sfs(\delta)&=\max\!\Big(0,\;1-\frac{|\delta-T|}{T}\Big)\\
&=\max\!\big(0,\;\min(\delta/T,\;2-\delta/T)\big),
\end{align*}
a triangular function supported on $(0,2T)$ and peaking at $\delta=T$. Three
consequences follow, and they are why $\sfs$ needs a blind baseline of its own.

First, $\sfs$ is zero for every $\delta\le0$ and every $\delta\ge2T$, so it
cannot distinguish a policy that never moves from one that moves in the wrong
direction or overshoots the ideal ratio $\rho$ by a further factor $\rho$.

Second, $\sfs$ is strictly increasing on $(0,T)$. Any mechanism putting
positive probability on $\delta\in(0,T)$, including pure output noise, produces
$\E[\sfs]>0$. A nonzero $\sfs$ is therefore not evidence of partial
grounding.

Third, the pointwise bound $\sfs(\delta)\le\delta^{+}/T$ used in the main proof
is an equality on $[0,T]$ and strict for $\delta>T$. The bound is
consequently tight in the small-noise regime, where the mass of $\delta$ sits
inside $[0,T]$, and conservative when the spread is large enough to push mass
past $T$. Simulating exchangeable log-normal blind policies at
$\alpha\in\{1/2,2\}$ confirms this: at log-scale spread $0.05$ the realized
$\E[\sfs]$ is $0.041$ against a bound of $0.041$, at spread $0.15$ it
is $0.122$ against $0.122$, and at spread $1.00$ it is $0.171$ against $0.812$.
Because $\E[\sfs]$ is not monotone in the spread, peaking near $0.23$
and falling once mass clears $2T$, a large observed $\sfs$ cannot be explained
away by invoking still larger noise; the bound is the right one-sided
statement. These three rows are regenerated by the \fn{sgf\_metrics.py} smoke
test.

\paragraph{Tightness of \texorpdfstring{$\tau$}{tau}-separation}
$\tau$-separation cannot be dropped from \cref{thm:blind}. If two conditions
$j,j'$
satisfy $|\alpha_j-\alpha_{j'}|/\max(\alpha_j,\alpha_{j'})\le2\tau$, a single
reported value placed between $\alpha_j c$ and $\alpha_{j'}c$ can fall inside
both tolerance bands, and a blind policy reporting it attains $p_j+p_{j'}$,
which exceeds $\max_j p_j$ whenever both shares are positive. The bound must
then be replaced by
$\max_{v}\sum_{j:|v-\alpha_j c|\le\tau\alpha_j c} p_j$, the largest total
share of any $\tau$-compatible group of conditions. The main paper's design
uses $\alpha\in\{1,\tfrac12,2\}$ at $\tau=0.1$, whose pairwise separations are
$0.50$, $0.50$, and $0.75$ against a requirement of $2\tau=0.2$, so the simple
form applies with margin. A design sweeping $\alpha$ over a fine grid would not
satisfy the condition and would need the grouped form. Both forms are
implemented in \fn{blind\_bound\_cg}, which checks $\tau$-separation before
returning the simple bound.

The bound is also a statement about the condition set \emph{actually scored}.
Dropping the base condition leaves $k=2$ conditions in equal shares, and the
supremum becomes $1/2$; a score computed on that subset must not be compared
against $1/3$.

\section{What each probe perturbs, and against what}
\label{app:probes}

\begin{table}[t]
\centering
\caption{``Ideal response'' is the change a correctly grounded model must show;
a dash marks a probe scoring a single response rather than a paired one.
``Blind baseline'' is the score the paper states an input-insensitive policy
would obtain. QuantiPhy already supplies an analytic ideal response, which we
do not claim as novel; our position is the last two columns.}
\label{tab:probes}
\small
\setlength{\tabcolsep}{3.5pt}
\renewcommand{\arraystretch}{1.10}
\begin{tabular}{@{}l l l l@{}}
\toprule
\kilth{Probe} & \kilth{Perturbs} & \makecell[l]{\kilth{Ideal}\\\kilth{response}} & \makecell[l]{\kilth{Blind}\\\kilth{baseline}} \\
\midrule
QuantiPhy            & prompt prior     & $\times\alpha$  & not reported \\
MeasureBench         & ---              & ---             & not reported \\
IRIS, PhysicsMind    & ---              & ---             & not reported \\
Step-grounding       & reasoning trace  & changes         & not reported \\
Cross-modal          & modality mask    & changes         & not reported \\
\midrule
\textbf{SGF} $R_\alpha$ & in-frame scale & $\times\alpha$ & \cref{thm:blind} \\
\textbf{SGF} $S_s$      & frame sampling & $\times 1$   & \cref{tab:e2} \\
\textbf{SGF} $T_\beta$  & asserted rate  & $\times\beta$ & \cref{tab:e2} \\
\bottomrule
\end{tabular}
\end{table}

\paragraph{The blind-baseline failure outside multimodal QA}
\citet{mi2024blindbaselines} show blind attacks that never query the model
outperform state-of-the-art membership-inference attacks across eight
evaluation datasets; \citet{anand2018blindfold} show question-only agents
match navigating agents on embodied question answering; and
\citet{pacchiardi2024clevrhans} show surface features alone predict LLM
benchmark answers above chance. None of these three reports what the best
blind policy could score in closed form, which is \cref{thm:blind}'s
contribution
relative to all of them.

\section{Dataset exclusion audit}
\label{app:dataset}

Five capture sessions were recorded. One was excluded because its calibration
clip has an unreadable \fn{moov} atom, confirmed independently by OpenCV and
ffmpeg, which orphans 8 otherwise usable droplet clips. A second was excluded
for absent provenance: it has no calibration clip, and its notes file is a
byte-identical duplicate of another session's, describing the wrong dates.

Of the three surviving sessions, one, recorded earliest, uses a printed-ruler
target and two, recorded later, use a genuine $1$\,mm-graduated ruler and a
pin-comb target respectively. The earliest session's ruler was initially
assumed to share the same $1$\,mm graduation, since all three targets look
alike in a raw frame at typical viewing resolution. It does not: its
graduation alternates one long mark with one short mark rather than repeating
uniform marks, which is how a ruler shows an inch scale rather than a
millimeter one. Direct pixel measurement (CLAHE-enhanced frames, thresholded
and scanned for tick length at four rows) confirmed alternating tick lengths
of $280$, $344$, $541$, and $518$\,px, consistent with $1/16$-inch
($1.5875$\,mm-per-tick) graduation and inconsistent with uniform $1$\,mm
spacing. The pin-comb session's pin pitch was independently already known to
be $1/16$-inch and required no correction. Calibration was measured directly
per session in pixels per tick, $104.52$ for the ruler session and $68.19$
for the pin-comb session, and $66.19$\,px/mm for the genuine $1$\,mm-ruler
session, then converted to pixels per millimeter using each session's actual
graduation ($104.52/1.5875$ and $68.19/1.5875$\,px/mm respectively) rather
than a shared assumed constant.

As an independent check, oracle-reported pre-impact velocities were compared
against the free-fall prediction $v=\sqrt{2gh}$ for the fixed $6.6$\,cm
release height. Under the original, uncorrected assumption of $1$\,mm
graduation for the earliest session, oracle velocities for that session
disagreed with free fall by a mean of $40\%$ across its high-confidence
clips; under the corrected $1/16$-inch graduation, the same clips' mean
disagreement falls to $6.7\%$ (median $3.7\%$), which is what confirmed the
graduation correction rather than merely being consistent with it, since the
free-fall prediction does not depend on the pixel calibration at all and so
provides a check external to the correction itself.

The oracle returned a valid pre-impact velocity for 48 of 62 attempted clips
($77\%$); the remainder failed at motion detection or track linking, typically
when a droplet entered the frame already close to impact. Of the 48, 34 carry a
verified deceleration signature at impact and 14 are lower-confidence plain
averages over short tracks, flagged as such rather than pooled silently. Every
value was cross-checked against free fall, $v=\sqrt{2gh}$, using the
session-specific drop height.

Model evaluation originally used a 12-clip subset drawn from the ruler and
genuine-$1$mm-ruler sessions, 6 per session. The genuine-$1$mm session was
subsequently lost with the compute environment it was staged on and could not
be recovered. Model evaluation now uses every high-confidence clip from the
single remaining ruler session ($104.52$\,px per $1/16$-inch tick), 27 clips
rather than 12. That session's oracle values were all high-confidence, so the
quality bar is unchanged; the cross-session comparison the original design
supported is not recoverable, and it is this same, now sole, evaluation
session whose graduation was initially misidentified and later corrected. The
pin-comb session remains excluded from model evaluation because its oracle
values are predominantly low-confidence, though it remains usable for future
work.

\paragraph{Capture setup}
Recordings used a Chronos 2.1 High Speed Camera
(monochrome, Kron Technologies). Lens focal length was adjusted per session
based on image clarity as observed on the camera's own display, rather than
fixed to a single value; the calibrations reported above (measured directly
per session, not assumed from a shared constant) absorb any resulting
per-session scale difference. The scene was illuminated with an LED panel.
The superhydrophobic surface was prepared by cleaning glass slides with
isopropanol, allowing them to dry, then spray-coating with a commercial
superhydrophobic coating (VisioDry Pro) and allowing the coating to dry
completely at ambient temperature before use.

\paragraph{\texorpdfstring{\Cref{fig:setup}}{The setup figure}'s clip, in detail}
\Cref{fig:setup} shows
\fn{cainhcg1.mp4} from the evaluation session: a $4\,\mu$L droplet released
from $h=6.6$\,cm, captured at the true $2996$\,fps. Panel a's two stills are
frame $392$ (mid-fall) and frame $397$ ($5$ frames, $1.7$\,ms, later, at
impact); the oracle's optical-flow pre-impact velocity for this clip is
$1.089$\,m/s against a free-fall prediction $\sqrt{2gh}=1.138$\,m/s
($4.3\%$ relative error, consistent with the session-wide residual reported
under ``Legibility and oracle error''). Panel b applies $R_\alpha$, $S_s$, and
$T_\beta$ (\cref{def:cond}) to the same clip's mid-fall frame.

\section{Supporting experiments that carry no claim}
\label{app:supporting}

\paragraph{Text-calibration positive control, per-model detail}
The main text (\cref{sec:threats}, ``A text-calibration positive control'')
reports that no model
produces the predicted $(\alpha,1)$ signature; this gives the full per-model
ratios. Qwen2.5-VL-7B's relabeling ratio overshoots the grounded target at
both magnitudes (mean $5.26$, median $4.25$ at $\alpha{=}0.5$; mean $5.75$,
median $4.03$ at $\alpha{=}2$) rather than tracking it, while $S_s$ stays
near $1.0$ at $\alpha{=}0.5$ (mean $1.25$, median $1.00$) but drifts at
$\alpha{=}2$ (mean $2.15$). Gemini~3.1~Pro's relabeling ratio moves in the
right direction but far short of the target (mean $2.53$, median $1.27$ at
$\alpha{=}0.5$; mean $1.62$, median $1.25$ at $\alpha{=}2$), and $S_s$
likewise drifts away from $1.0$ at both magnitudes (mean $1.60$/$2.17$,
median $1.00$/$1.27$). Qwen3-VL-30B-A3B comes closest to the predicted
signature without reaching it: its median relabeling ratio at $\alpha{=}2$
is $2.00$, exactly the grounded target, but the paired resampling median is
$1.51$, too far from $1.0$ under the classifier's tolerance to register as
grounded; under means it classifies \emph{calibration-blind} at
$\alpha{=}0.5$ instead. Zero-velocity rates under this text-only condition:
$0\%$ ($R_\alpha$) and $1.9\%$ ($S_s$) for Qwen2.5-VL, $0\%$ and $1.2\%$ for
Gemini, $1.2\%$ and $8.0\%$ for Qwen3-VL, all at or above each model's
video-based main-evaluation rate of $0\%$.

\paragraph{Hidden-prior probe}
Run on Qwen2.5-VL-7B-Instruct only, on the original 12-clip, two-session
design; neither this probe nor the robustness sweep below has been rerun on
the 27-clip single-session design or extended to Gemini 3.1 Pro or
Qwen3-VL-30B-A3B, so both results are legacy findings that carry no claim
about the current dataset or the other two models. The droplet is released
from a fixed height ($6.6$\,cm) outside the field of
view. We ask for it; a model that asserts a number rather than declining is
injecting a prior, and $\hph$ is the fraction of scorable items on which it
does so. An initial version found $\hph=0$ on all 12 items, which proved
uninterpretable: the identical decline appeared with no video attached, so it
was a content-independent default. We repaired this by pairing the question
with an easy control question in the same call and counting a decline only when
the control is answered correctly. After the repair, $\hph=0$ on the 7 of 12
items where the control was answered correctly; a crop-and-contrast fix,
applied because one session's framing compresses the impact surface into a
vignette-darkened strip with peak pixel value near $90/255$, recovered one
further item, giving 8 of 12. Zero events in 8 items has an exact 95\% interval
of $[0.000,0.369]$, so the result excludes only a hallucination rate above
roughly 37\% and is not evidence that the models decline appropriately. Of the
remaining four items, one is a confirmed multi-droplet clip where the wrong
answer plausibly concerns which droplet is meant; three fail despite visibly
clear enhanced footage for reasons we did not diagnose.

\paragraph{Robustness sweep}
$\sfs$ is stable across ruler orientation ($0.209$ original versus $0.162$
rotated) and across the three prompt seeds (range $0.087$), which argues the
low $\sfs$ in the main evaluation is not an artifact of one orientation or one
phrasing. It
appears unstable across $\alpha$, spiking to $0.396$ at $\alpha=4$, but at that
setting every item in the sweep reported the identical (base, scaled) pair.
That ratio is content-independent and happens to sit near the target for
$\alpha=4$, so it scores well by coincidence. We report the spike as an
artifact rather than let the largest number in the sweep stand
uncontextualized. The horizontal-orientation condition is synthesized by
rotating the real ruler footage, since both valid sessions photograph a
vertical ruler.

\paragraph{Decoupling analysis: accuracy versus faithfulness}
Correlating a per-item score against an almost-constant-zero accuracy target is
statistically fragile, so we substitute negated relative error as the closest
continuous stand-in and report it alongside, not instead of, the literal
$\acc@10\%$ figures. Under the corrected calibration this correlation must be
reported per $\alpha$ rather than averaged across $\alpha\in\{0.5,2\}$, because
Qwen2.5-VL-7B's per-item $\sfs$ at $\alpha{=}2$ is now exactly $0$ on all $81$
items (\cref{tab:e2}'s $\cg$ collapse extends to this bounded summary as
well), making the correlation undefined at that magnitude rather than merely
small; averaging a defined and an undefined correlation is not meaningful, so
we report both magnitudes separately. Pearson $r$ between negated relative
error and per-item $\sfs$, with Fisher intervals and, where computable, cluster
bootstrap intervals at the honest cluster count of 27: Qwen2.5-VL-7B,
$r=0.009$ ($n=81$) at $\alpha{=}0.5$, Fisher $[-0.210,0.227]$, cluster
bootstrap $[-0.049,0.133]$, both containing zero, and undefined at
$\alpha{=}2$; Gemini 3.1 Pro, $r=0.165$ ($n=79$) at $\alpha{=}0.5$, Fisher
$[-0.058,0.373]$, cluster bootstrap $[-0.020,0.365]$, and $r=-0.017$ ($n=79$)
at $\alpha{=}2$, Fisher $[-0.237,0.205]$, cluster bootstrap $[-0.340,0.131]$,
all four bounds containing zero; Qwen3-VL-30B-A3B, $r=-0.437$ ($n=81$) at
$\alpha{=}0.5$, Fisher $[-0.598,-0.242]$, cluster bootstrap
$[-0.652,-0.315]$, both excluding zero, and $r=0.077$ ($n=81$) at
$\alpha{=}2$, Fisher $[-0.144,0.290]$, cluster bootstrap $[-0.072,0.299]$,
containing zero. Pooled across all three models and both magnitudes on the
same 27 clips ($n=482$), $r=-0.052$, Fisher $[-0.140,0.038]$, cluster
bootstrap $[-0.180,0.032]$, both containing zero; pooling models does not
raise the cluster count, since each clip is shared across all three, so it
adds items per cluster rather than independent clusters.

None of the three models, nor the pooled sample, shows the originally
hypothesized decoupling pattern of high task accuracy paired with low
scale-faithfulness, and under the corrected calibration the only interval
excluding zero (Qwen3-VL at $\alpha{=}0.5$) points in the decoupling-opposite
direction, lower relative error associating with lower $\sfs$, the reverse of
what the pre-correction data had shown at this magnitude for that model.
$\acc@10\%$'s near-zero rate means the accuracy side of every correlation is
doing very little work, and the effect could equally be an artifact of both
quantities responding to the same per-clip difficulty, or of the small item
counts once one magnitude for one model is degenerate. The honest statement
is that this design cannot answer the decoupling question at $\tau{=}10\%$,
and the corrected data weakens rather than strengthens any directional
reading of it.

The item counts in this correlation differ from \cref{tab:e2}'s $\acc@10\%$
counts
because they use a different denominator, not because either number is wrong.
\Cref{tab:e2} reports $\acc@10\%$ over every valid \emph{item} ($81$ for
Qwen2.5-VL
and Qwen3-VL, $80$ for Gemini after dropping $3$ parse failures). This
correlation instead requires a complete (base, $\alpha{=}0.5$, $\alpha{=}2$)
\emph{triple} per (clip, seed) to compute a per-item $\sfs$, so Gemini's
count drops further to $79$ per magnitude: one of its three parse failures
fell on a different item than the other two but still orphaned its triple.

\paragraph{\texorpdfstring{$\tau$}{tau}-sweep, recomputed}
A single fixed report scored against a
single free-fall velocity, as in the illustrative $6.6$\,cm/$1.138$\,m/s
calculation above, is a useful sanity check but not the actual sweep: the 27
clips have distinct true pre-impact velocities under the corrected
calibration (oracle range $0.54$ to $1.23$\,m/s on the base condition), so a
model's relative error varies item by item even when its reported value is
constant. Recomputed directly from the raw per-item outputs, on the base
condition: Qwen2.5-VL, whose reported velocity collapses to $1.5$\,m/s on
$54$ of $81$ items and $3.5$\,m/s on $24$ of $81$, scores $0/81$ at
$\tau{=}0.10$ and $0.20$, $6/81=0.074$ at $\tau{=}0.30$, jumping to
$43/81=0.531$ at $\tau{=}0.40$ once the tolerance band widens enough to
catch its $1.5$\,m/s default against the lower end of the corrected true
range, and reaching $50/81=0.617$ by $\tau{=}0.50$ with no further gain
through $\tau{=}1.00$: a step driven by where its two fixed outputs happen
to sit relative to the corrected range, not by tracking. Gemini's more varied
output gives a smoother climb: $8/80=0.100$ at $\tau{=}0.10$, $17/80=0.212$
at $\tau{=}0.20$, $25/80=0.312$ at $\tau{=}0.30$, $46/80=0.575$ at
$\tau{=}0.40$, $50/80=0.625$ at $\tau{=}0.50$, $53/80=0.662$ at
$\tau{=}0.75$, $56/80=0.700$ at $\tau{=}1.00$. Qwen3-VL, whose reported
velocity collapses to $0.4$\,m/s on $44$ of $81$ items and $0.25$\,m/s on
$30$ of $81$ (both near the low end of the corrected true range), scores
$0/81$ through $\tau{=}0.20$, $3/81=0.037$ at $\tau{=}0.30$ and $0.40$,
steps to $6/81=0.074$ at $\tau{=}0.50$, then jumps to $44/81=0.543$ at
$\tau{=}0.75$ and $81/81=1.000$ at $\tau{=}1.00$: its fixed-default output
happens to fall within $100\%$ of every clip's corrected true value, again
by coincidence of scale rather than by tracking. All three sweeps shifted
substantially from the pre-correction values, since the true-velocity
denominator itself moved by the same $1.5875\times$ factor as the
calibration constant; none resembles the originally hypothesized decoupled
pattern (high accuracy paired with low $\sfs$) at any $\tau$ checked, before
or after the correction.

\section{Model provenance and compute}
\label{app:provenance}

\paragraph{Reconciliation}
\fn{sgf\_metrics.py}'s smoke test reproduces
every identity proved in \cref{app:rank}--\cref{app:loglinscope}, and the Code
and Data
Supplement's reconciliation script recomputes each of \cref{tab:e2}'s cells
directly from the raw per-item outputs and checks it against the value
printed in the paper; all checks pass as of submission.

The evaluation session's ruler graduation was corrected from an assumed
$1$\,mm per tick to the measured $1/16$-inch ($1.5875$\,mm per tick) on
2026-08-22 (\cref{app:dataset}), after data collection for every model reported
here. Every quantity scored against the true calibration or the true oracle
velocity ($\cg$, $\acc@\tau$, $\sfs$, the decoupling correlation, the
$\tau$-sweep) was recomputed from the unchanged raw model outputs; every
experiment whose prompt states the calibration value as text ($T_\beta$, the
instructed-constant control, and the text-calibration positive control) was
rerun in full against the corrected value, since the model's response itself
depends on what it was told. Response ratios under $R_\alpha$ and $S_s$ in
the main evaluation and resampling control are unaffected either way, since
they compare two model outputs to each other and never depend on the
calibration constant directly.

Both local models were run with greedy decoding (temperature $0$,
\fn{do\_sample=False}) at checkpoint revisions
\fn{cc594898137f460bfe9f0759e9844b3ce807cfb5} (Qwen2.5-VL-7B-Instruct) and
\fn{9c4b90e1e4ba969fd3b5378b57d966d725f1b86c} (Qwen3-VL-30B-A3B-Instruct), on a
single NVIDIA A100 and a single NVIDIA H200 respectively. Gemini 3.1 Pro
Preview was queried at temperature $0$ via the OpenRouter API at the
\fn{google/gemini-3.1-pro-preview} endpoint, which OpenRouter resolves to the
canonical snapshot \fn{google/gemini-3.1-pro-preview-20260219}. That slug
predates our data collection, which is consistent with, though not a
cryptographic guarantee of, having been the version served throughout. The raw
API responses saved during these runs carry no more specific model-version
field, and an unpinned third-party endpoint cannot offer the reproducibility
guarantee a checkpoint hash gives for a locally-run open-weight model. Total
API spend across the main evaluation and resampling control was \$10.36
(\$6.22 + \$4.15); the instructed-constant control and the text-calibration
positive control were rerun after the calibration correction described in
\cref{app:dataset}, since both assert a calibration value as text, adding
\$10.48
(\$0.41 instructed-constant, \$6.05 for $R_\alpha$ and \$4.03 for $S_s$ under
the text-calibration positive control). Total spend across all Gemini
experiments in this paper is \$20.85.

Gemini-2.5-Flash-Lite was evaluated on the superseded 12-clip, two-session
design before the session loss described in \cref{app:dataset}. Those results
($\cg=0.380$,
$\sfs=0.071$ and $0.041$ at $\alpha=0.5$ and $2$) use a different design and
are not comparable to the current numbers; they are recorded here only so the
record is complete and are not used for any claim.

Both local nodes ran Ubuntu $22.04$ LTS (Jammy Jellyfish; $22.04.2$ on the
A100 node, $22.04.5$ on the H200 node) with a single conda environment shared
across both, so package versions are identical: \fn{transformers}
$5.15.0$, \fn{torch} $2.6.0$+cu124, and \fn{qwen\_vl\_utils} $0.0.14$. Driver
and CUDA toolkit versions differ by node: the A100 node reports NVIDIA driver
$575.57.08$ with CUDA $12.9$ (\fn{nvidia-smi}) and CUDA toolkit $12.1.105$
(\fn{nvcc}); the H200 node reports driver $580.95.05$ with CUDA $13.0$
(\fn{nvidia-smi}) and CUDA toolkit $12.6.77$ (\fn{nvcc}). \fn{torch}'s bundled
CUDA $12.4$ runtime is used for inference on both regardless of the system
toolkit version, since the cu124 wheel does not depend on the module-loaded
system CUDA (Code and Data Supplement, \fn{00\_setup\_env.sh}).

\FloatBarrier

\end{document}